\documentclass{article}

\usepackage[preprint]{neurips_2026}

\usepackage[utf8]{inputenc} 
\usepackage[T1]{fontenc}    
\usepackage{hyperref}       
\usepackage{url}            
\usepackage{booktabs}       
\usepackage{amsfonts}       
\usepackage{nicefrac}       
\usepackage{microtype}      
\usepackage{xcolor}         
\usepackage{multirow}
\usepackage{amsmath}
\usepackage{graphicx}

\usepackage{wrapfig}
\usepackage{amsthm}

\newtheorem{theorem}{Theorem}
\newtheorem{corollary}{Corollary}[theorem]

\title{World Models as Group Actions}

\author{
\begin{tabular}{c}
\textbf{Zijie~Wang}\textsuperscript{1,2}
\quad
\textbf{Wei~Zhang}\textsuperscript{3} \quad
\textbf{Weiming~Zhang}\textsuperscript{3} \quad
\textbf{Fanqi~Zhang}\textsuperscript{1}
\\[0.65em]
\textbf{Xiao~Tan}\textsuperscript{3} \quad
\textbf{Yipeng~Qin}\textsuperscript{4} \quad
\textbf{Guanbin~Li}\textsuperscript{1,2,5}\thanks{Correspondence to: liguanbin@mail.sysu.edu.cn}
\\[0.25em]
{\normalfont
\textsuperscript{1}Sun Yat-sen University \quad
\textsuperscript{2}Shenzhen Loop Area Institute \quad
\textsuperscript{3}Baidu Inc. \quad
\textsuperscript{4}Cardiff University
}
\\[0.25em]
{\normalfont 
\textsuperscript{5}Guangdong Key Laboratory of Big Data Analysis and Processing
}
\end{tabular}
}

\begin{document}

\maketitle

\begin{abstract}
Video world models have achieved strong visual realism, but this does not ensure that their dynamics are truly governed by actions. 
In this work, we argue that action faithfulness should be understood through the compositional structure of actions, which in many embodied settings follows a group structure (e.g., SE(2) for navigation). 
Based on this insight, we formalize action-conditioned world modeling as realizing a {\it group action} on the state space, providing a principled criterion for evaluating dynamics beyond visual quality.
To operationalize this framework, we propose a unified approach that enforces identity, inverse, and composition consistency via latent-space regularization with synthesized supervision, avoiding additional data collection. We further introduce two metrics: Group-Action Consistency (GAC) and Group-Action Robustness (GAR), to evaluate structural correctness and rollout stability. Extensive experimental results show that our method consistently improves both GAC and GAR in state-of-the-art video world models without degrading perceptual quality.
\end{abstract}

\section{Introduction}

Video world models are expected to serve as action-conditioned simulators for planning~\cite{song2023llm,li2026grhp,li2026embodied}, decision making~\cite{li2024embodieddm,zhang2025vlabench,liang2025large}, and embodied intelligence~\cite{liu2024meia,liu2025aligning,jiang2025beyond}.
Recent diffusion-based video models have greatly improved visual realism and long-horizon coherence~\cite{ho2022video,yang2024cogvideox,wan2025wan,bar2025nwm}.
However, visual realism alone does not ensure that a model behaves as a faithful simulator. A simulator must generate dynamics that {\it are governed by actions}, not merely correlated with them.

\begin{figure*}[!ht]
    \centering
    \includegraphics[width=\textwidth]{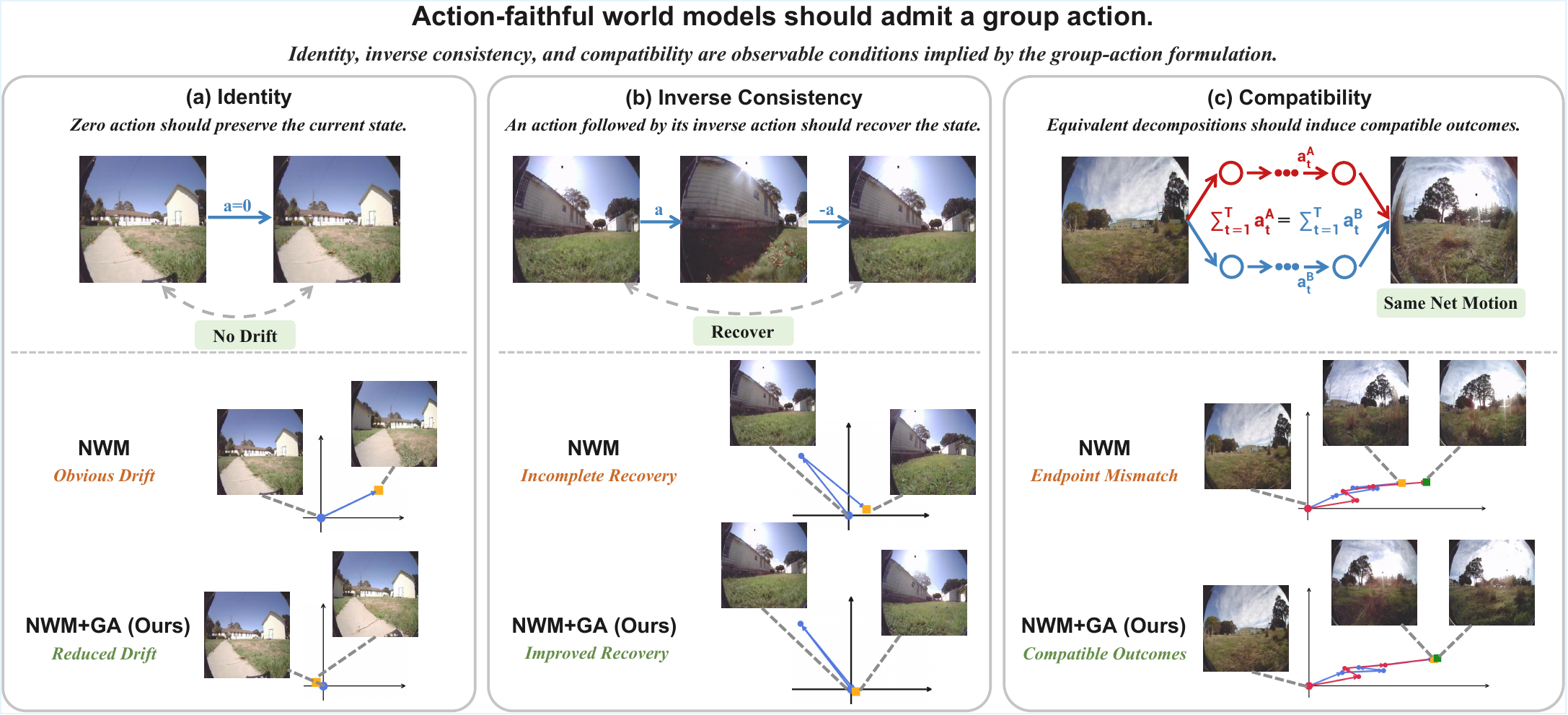}
    \vspace{-0.4cm}
    \caption{
    \textbf{Consequences of group-action-faithful dynamics.}
    Identity, inverse consistency, and compatibility are observable conditions implied by the group-action formulation.
    Representative examples illustrate that GA regularization reduces zero-action drift, improves forward--inverse recovery, and yields more compatible outcomes under equivalent action decompositions.
    }
    \vspace{-0.6cm}
    \label{fig:teaser-1}
\end{figure*}

In this work, we argue that the above-mentioned requirement can be understood through the compositional structure of actions. In many embodied settings, actions are not arbitrary inputs but correspond to transformations with well-defined algebraic properties. For instance, navigation actions can be interpreted as planar rigid motions, which form the Lie group $SE(2)$. Such structure imposes fundamental constraints on valid dynamics: a null action should leave the state unchanged, an action followed by its inverse should cancel, and different decompositions of the same transformation should produce consistent outcomes. These properties are intrinsic to how actions govern state evolution, rather than artifacts of any specific model. Fig.~\ref{fig:teaser-1} illustrates these consequences as observable conditions for action-faithful world modeling.

Building on this perspective, we propose a novel theoretical framework that formalizes \textbf{action-conditioned world modeling as realizing a \emph{group action} of the action space on the underlying state space}. 
It provides a general principle for action-conditioned world models whose actions induce structured transformations of the state, and we further instantiate it in the navigation setting of video world models, where the action space is grounded in $SE(2)$ and the resulting structure can be directly analyzed through generated rollouts.
This formulation introduces a new dimension for understanding and evaluating world models (beyond visual realism) by assessing whether their dynamics respect the underlying group action structure. Leveraging this perspective, we conduct a preliminary study on a recent state-of-the-art video world model, NWM~\cite{bar2025nwm}. Despite producing visually convincing rollouts, NWM exhibits clear violations of the group-action structure.

To address this gap, we introduce a novel framework for enforcing and evaluating action-faithful dynamics under the proposed group-action formulation. Specifically, it addresses the challenges of computational cost and lack of group-action supervision by moving the regularization from explicit state-space to efficient latent-space, where identity, inverse, and composition consistency are enforced directly on latent representations of model rollouts. To enable this, the method synthesizes supervision signals from existing trajectories by constructing zero-action, forward-inverse, and composition-equivalent action segments, avoiding the need for additional data collection. Finally, it defines two complementary evaluation metrics: Group-Action Consistency (GAC), which measures violations of structural properties in recovered state space, and Group-Action Robustness (GAR), which quantifies rollout stability under stochastic generation. Together, these components form a unified training and evaluation pipeline that operationalizes group-action structure in modern video world models.

Extensive experiments demonstrate that our framework consistently reduces both GAC and GAR errors across representative video world model backbones, while preserving perceptual generation quality.
In summary, our contributions include:
\begin{itemize}
    \item Conceptually, we formalize action-conditioned world modeling as a {\it group action}, providing a principled perspective that characterizes action faithfulness through identity, inverse, and composition consistency.
    \item Practically, we propose a unified framework that enforces these group-action constraints via latent-space regularization with synthesized supervision, along with corresponding evaluation metrics.
    \item Extensive experiments show that our approach consistently improves local consistency and long-horizon rollout stability in state-of-the-art video world models, while maintaining visual generation quality.
\end{itemize}
A comprehensive review of related work is provided in Appendix~\ref{sec:appendix_relatedwork}.
\section{Group-Action Formulation of World Models}
\label{sec:ga}

A central challenge in learning world models is ensuring that predictions are structurally consistent with the underlying transformations induced by actions. Standard approaches typically optimize one-step prediction accuracy, treating actions as independent inputs. As a result, the learned dynamics may fail to respect the compositional structure inherent in many physical systems.
Addressing this issue, we take a structured perspective and posit that, in many domains, actions possess an underlying algebraic structure, forming a group $G$ equipped with a composition law, identity element, and inverses. Rather than learning an unconstrained transition function, we require the world model $f_\theta : \mathcal S \times G \to \mathcal S$ to realize a \emph{group action} of $G$ on the state space:

\begin{theorem}
Let $G$ be a group with identity $e$, composition law $\circ$, and inverses $a^{-1}$. Suppose the world model $f_\theta : \mathcal S \times G \to \mathcal S$ satisfies, for all $s \in \mathcal S$ and $a_1,a_2,a\in G$,
\begin{align}
&\textnormal{(Compatibility)} \qquad
&f_\theta(f_\theta(s,a_1),a_2) = f_\theta(s,a_1\circ a_2), \\
&\textnormal{(Identity)} \qquad
&f_\theta(s,e) = s, \\
&\textnormal{(Inverse consistency)} \qquad
&f_\theta(f_\theta(s,a),a^{-1}) = s.
\end{align}
Then $f_\theta$ defines a \textbf{group action} of $G$ on $\mathcal S$. In particular, for any finite sequence $a_1,\dots,a_n\in G$,
\begin{equation}
    f_\theta(\cdots f_\theta(f_\theta(s,a_1),a_2)\cdots,a_n) = f_\theta(s,a_1\circ\cdots\circ a_n).
\end{equation}
\label{th:group_action}
\end{theorem}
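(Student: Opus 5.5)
The argument is a direct check of the definition of a (right) group action, followed by an induction on the length of the action sequence. The composition convention matters here. The compatibility axiom $f_\theta(f_\theta(s,a_1),a_2)=f_\theta(s,a_1\circ a_2)$ says that applying $a_1$ and then $a_2$ equals applying $a_1\circ a_2$. Writing $s\cdot a := f_\theta(s,a)$, it reads $(s\cdot a_1)\cdot a_2 = s\cdot(a_1\circ a_2)$, which is exactly the axiom of a right action. A right group action of $G$ on $\mathcal S$ is, by definition, a map $\mathcal S\times G\to\mathcal S$ satisfying this axiom together with $s\cdot e=s$. So the first step is to verify that the hypotheses (Compatibility) and (Identity) are literally these two axioms. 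That already establishes that $f_\theta$ is a group action. If one prefers the left-action convention, compose with inversion, $g\ast s := f_\theta(s,g^{-1})$, and the axioms for left actions follow from $(g h)^{-1}=h^{-1}g^{-1}$.

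Next I would note that (Inverse consistency) is not an independent requirement. Compatibility gives $f_\theta(f_\theta(s,a),a^{-1}) = f_\theta(s,a\circ a^{-1}) = f_\theta(s,e)$, and Identity turns this into $s$. I would still record it, because it yields a useful consequence. Each map $f_\theta(\cdot,a):\mathcal S\to\mathcal S$ is a bijection with inverse $f_\theta(\cdot,a^{-1})$, using the symmetric identity with $a$ and $a^{-1}$ swapped. So $a\mapsto f_\theta(\cdot,a)$ is an (anti-)homomorphism $G\to\mathrm{Sym}(\mathcal S)$, which is the equivalent permutation-representation form of a group action.

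For the displayed multi-step identity I would induct on $n$. The case $n=1$ is trivial, and $n=2$ is Compatibility. For the inductive step, let $s_{n-1}$ denote the $(n-1)$-fold nested rollout. The inductive hypothesis gives $s_{n-1}=f_\theta(s,a_1\circ\cdots\circ a_{n-1})$. Then
\begin{equation*}
f_\theta(s_{n-1},a_n)=f_\theta\bigl(f_\theta(s,a_1\circ\cdots\circ a_{n-1}),a_n\bigr)=f_\theta(s,(a_1\circ\cdots\circ a_{n-1})\circ a_n),
\end{equation*}
where the second equality is Compatibility applied with first argument $a_1\circ\cdots\circ a_{n-1}$. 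Associativity in $G$ makes the unparenthesized product well defined, and this closes the induction. One can also allow $n=0$ by reading the empty product as $e$ and invoking Identity.

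The only real obstacle is conventions rather than mathematics. One must be explicit that the composition order $a_1\circ a_2$ means "first $a_1$, then $a_2$", so the structure obtained is a right action, or a left action after the inversion trick. For $G=SE(2)$ with body-frame increments this is the natural order. The rest is routine.
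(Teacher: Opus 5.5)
Your proposal is correct and follows the paper's proof essentially step for step: the paper likewise observes that (Compatibility) and (Identity) are exactly the right-action axioms under the ``first $a_1$, then $a_2$'' convention, proves the finite-sequence identity by induction using Compatibility plus associativity, and shows (Inverse consistency) is redundant via $f_\theta(f_\theta(s,a),a^{-1})=f_\theta(s,a\circ a^{-1})=f_\theta(s,e)=s$. Your additional remarks are valid but not in the paper: the left-action version via $g\ast s := f_\theta(s,g^{-1})$, the bijectivity of each map $f_\theta(\cdot,a)$, and the $n=0$ case.
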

Please see Appendix~\ref{sec:appendix_group_action} for the proof of Theorem~\ref{th:group_action}. 
Notably, in the video world models for embodied intelligence, actions often correspond to physical motions in the plane and can be naturally modeled as elements of the Lie group $SE(2)$: 
\begin{corollary}
If $G=SE(2)$, then a group-action-faithful model preserves the algebraic structure of planar rigid motions, so that the recovered motion induced by a rollout is consistent with the composed ego-motion, up to stochastic generation and state-estimation error.
\label{corollary:video_world_model}
\end{corollary}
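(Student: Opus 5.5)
The corollary is a direct specialization of Theorem~\ref{th:group_action}, so the plan is to instantiate that theorem with $G=SE(2)$ and then add the interpretive link between rollouts and recovered ego-motion. First, I will record that $SE(2)$ is a group under composition of rigid motions. Writing $g=(R_\phi,t)$ with $R_\phi\in SO(2)$ and $t\in\mathbb R^2$, the composition is $(R_1,t_1)\circ(R_2,t_2)=(R_1R_2,\,t_1+R_1t_2)$, the identity is $e=(I,0)$, and the inverse is $(R,t)^{-1}=(R^\top,-R^\top t)$. Associativity follows from the homogeneous-matrix embedding $g\mapsto\begin{pmatrix}R&t\\0&1\end{pmatrix}$. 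One point needs care: the compatibility law in Theorem~\ref{th:group_action} is written as a right action, $f_\theta(f_\theta(s,a_1),a_2)=f_\theta(s,a_1\circ a_2)$. I will therefore fix the convention that each action $a_k$ is an ego-frame relative motion, so that the accumulated pose after $a_1,\dots,a_n$ is $a_1\circ\cdots\circ a_n$, with later motions expressed in the current body frame. This is the standard right-multiplication convention for odometry, and it matches the theorem's ordering.

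Second, I will apply Theorem~\ref{th:group_action} directly. If $f_\theta$ satisfies compatibility, identity and inverse consistency with $G=SE(2)$, then it is a group action, and the $n$-fold rollout equals $f_\theta(s,a_1\circ\cdots\circ a_n)$. Consequently, any two action sequences with the same product produce the same state, the null action fixes states, and a motion followed by its inverse returns to $s$. This is what "preserves the algebraic structure of planar rigid motions" means: the map $g\mapsto f_\theta(\cdot,g)$ is an (anti-)homomorphism from $SE(2)$ into the transformations of $\mathcal S$.

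Third, I will make "recovered motion" precise. I introduce a pose-estimation map $\pi:\mathcal S\times\mathcal S\to SE(2)$ that returns the relative ego-motion between two states, for example via visual odometry on frames. I assume it is equivariant in the idealized sense $\pi(s,f_\theta(s,g))=g$. The group-action property then gives
\begin{equation}
\pi\bigl(s,f_\theta(\cdots f_\theta(s,a_1)\cdots,a_n)\bigr)=a_1\circ\cdots\circ a_n.
\end{equation}
So the recovered motion equals the composed ego-motion. The qualifier "up to stochastic generation and state-estimation error" is handled by writing the realized rollout as $\hat f_\theta=f_\theta+\varepsilon_{\mathrm{gen}}$ and $\hat\pi=\pi+\varepsilon_{\mathrm{est}}$. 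The discrepancy is then attributed entirely to these two terms, and a Lipschitz assumption on $\pi$ bounds the accumulated deviation.

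The main obstacle is this third step. The corollary is informal: neither "recovered motion" nor the error model is defined, and states (video frames) do not intrinsically carry a pose. I would handle it by stating the equivariance assumption on $\pi$ explicitly as a hypothesis, so that the exact statement is proved rigorously and the error clause becomes a remark rather than a quantitative bound. The group-theoretic part is routine once the left/right convention is fixed consistently with the theorem.
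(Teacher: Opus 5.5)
Your proposal follows the paper's route in its first two steps and is correct; the real difference is in how you handle ``recovered motion.''

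\textbf{Shared part.} The paper's justification (Appendices~\ref{app:why_se2} and~\ref{app:observable_conditions}) also instantiates Theorem~\ref{th:group_action} with $G=SE(2)$. It records the same group law $(R_1,\mathbf{p}_1)\circ(R_2,\mathbf{p}_2)=(R_1R_2,\mathbf{p}_1+R_1\mathbf{p}_2)$, with identity $(I,\mathbf{0})$ and inverse $(R^\top,-R^\top\mathbf{p})$. It uses the same right-action ordering that its proof of the theorem fixes. It also leaves the stochastic-generation and state-estimation clause informal, attributing any deviation to the estimator $\Phi$ and to sampling.

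\textbf{Your third step.} The paper never formalizes ``recovered motion.'' It only concludes that the effect of a sequence is determined by its composition, and Theorem~\ref{th:group_action} gives that outright. Your hypothesis $\pi(s,f_\theta(s,g))=g$ is what upgrades this to equality with $a_1\circ\cdots\circ a_n$. Some such hypothesis is necessary for that stronger reading, because the trivial action $f_\theta(s,g)=s$ satisfies all three axioms yet tracks no ego-motion. Your step makes explicit an assumption the paper leaves implicit. Note that it is a joint condition on $\pi$ and $f_\theta$: it forces $f_\theta(s,\cdot)$ to be injective, so the action must be free. State it that way rather than as a property of the estimator alone.

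\textbf{Correction to the error remark.} A Lipschitz bound on $\pi$ alone does not control accumulated deviation. Per-step generation errors propagate through every later transition, so you also need Lipschitz control of $f_\theta(\cdot,a)$ in the state argument, and the resulting bound grows with horizon. Also, ``$+\,\varepsilon$'' on $SE(2)$ must be read in a chart or as composition with an error element, since $SE(2)$ is not a vector space. Because you already present this as a remark rather than a bound, this does not affect the exact part of your argument.
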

In this case, the proposed formulation characterizes action faithfulness through consistency with planar rigid-motion composition.

\subsection{Instantiating Group Actions in Video World Models}
\label{sec:instantiation}

Video world models predict future observations conditioned on past observations and actions.
Given an initial observation $\mathbf{o}_0$ and an action sequence $\mathbf{a}_{1:T} = (\mathbf{a}_1,\dots,\mathbf{a}_T)$, the model generates a rollout
\begin{equation}
\mathbf{o}_{1:T} \sim p_{\theta}(\mathbf{o}_{1:T} \mid \mathbf{o}_0, \mathbf{a}_{1:T}),
\end{equation}
where each observation $\mathbf{o}_t$ is generated by an agent with state $\mathbf{s}_t$ navigating in a given scene at time $t$. According to Theorem~\ref{th:group_action} and Corollary~\ref{corollary:video_world_model}, this induces state dynamics of the agent in the form:
\begin{align}
\mathbf{s}_{t+1} &= f_{\theta}(\mathbf{s}_t, \mathbf{a}_t), 
\label{eq:state_transition}
\end{align}
where $\mathbf{s}_{t}=(R_{t}, \mathbf{p}_{t})$, $R_t$ is the rotation and $\mathbf{p}_t$ is the translation; $\mathbf{a}_t = (\Delta x_t, \Delta y_t, \Delta \theta_t)$ parameterizes a local planar ego-motion increment.
Note that for clarity, we suppress stochasticity in the generation process and focus on the deterministic transition dynamics, as our goal is to characterize the structural properties of the underlying dynamics.

\paragraph{Estimating Agent States from Observations.} 
We estimate the agent state $\mathbf{s}_t$ (Eq.~\ref{eq:state_transition}) from observations $\mathbf{o}_t$ using a pose estimator $\Phi(\cdot)$:
\begin{equation}
\mathbf{s}_{1:T}  = \Phi(\mathbf{o}_{1:T}),
\label{eq:observation_to_state}
\end{equation}
Based on the recovered states, we define a distance metric in the state space as:
\begin{equation}
d(\mathbf{s}_i, \mathbf{s}_j)
=
\|\mathbf{p}_i - \mathbf{p}_j\|_2
+
\alpha \, d_R(R_i, R_j),
\label{eq:state_distance}
\end{equation}
where $\mathbf{p}_i$ and $R_i$ denote the translation and rotation components of $\mathbf{s}_i$, respectively, $d_R(\cdot,\cdot)$ measures rotational discrepancy, and $\alpha$ balances the contributions of translation and rotation.

\paragraph{Instantiation of Group Action Conditions.}
Under the above ego-motion action parameterization, the composition law $\circ$ is operationally implemented by accumulating action increments over a segment, while the identity and inverse actions are represented by $\mathbf{e}=\mathbf{0}$ and $\mathbf{a}^{-1}=-\mathbf{a}$, respectively.
The relation between this operational form and exact $SE(2)$ composition is discussed in Appendix~\ref{app:local_composition_scope}.
Based on this structure, we instantiate the three group action conditions in Theorem~\ref{th:group_action} using the state distance defined in Eq.~\ref{eq:state_distance} as follows:
\begin{itemize}
    \item {\it Compatibility}: $d(\mathbf{s}^A_T, \mathbf{s}^B_T) = 0$, where initial state $\mathbf{s}^A_0 = \mathbf{s}^B_0$, action sequence $\mathbf{a}^A_{1:T} \neq \mathbf{a}^B_{1:T}$, and their accumulated increments satisfy $\sum^T_{t=1} \mathbf{a}^A_t = \sum^T_{t=1} \mathbf{a}^B_t$.
    \item {\it Identity}: $d(\mathbf{s}_e, \mathbf{s}) = 0$, where $\mathbf{s}_e = f_\theta(\mathbf{s},\mathbf{0})$.
    \item {\it Inverse consistency}: $d(\mathbf{s}_{inv}, \mathbf{s}) = 0$, where $\mathbf{s}_{inv}=f_\theta(f_\theta(\mathbf{s},\mathbf{a}),-\mathbf{a})$.
\end{itemize}

\subsection{Do Native Video World Models Respect Group Action Structure?}
\label{sec:preliminary_results}

Despite the group action structure outlined above in Sec.~\ref{sec:instantiation}, it remains unclear whether existing action-conditioned video world models, which primarily emphasize visual realism, respect this structure. To investigate this, we conduct preliminary experiments. As shown in Fig.~\ref{fig:teaser}, our results indicate that the state-of-the-art NWM~\cite{bar2025nwm} violates all three group action conditions (identity, inverse, and compatibility), even though the generated videos appear visually plausible. 
These findings reveal a clear gap between visual realism and dynamical consistency, motivating the development of methods to enforce such structure and systematic evaluation protocols to quantify adherence to group action properties in these video world models.

\begin{figure*}[t]
\centering
\includegraphics[width=\textwidth]{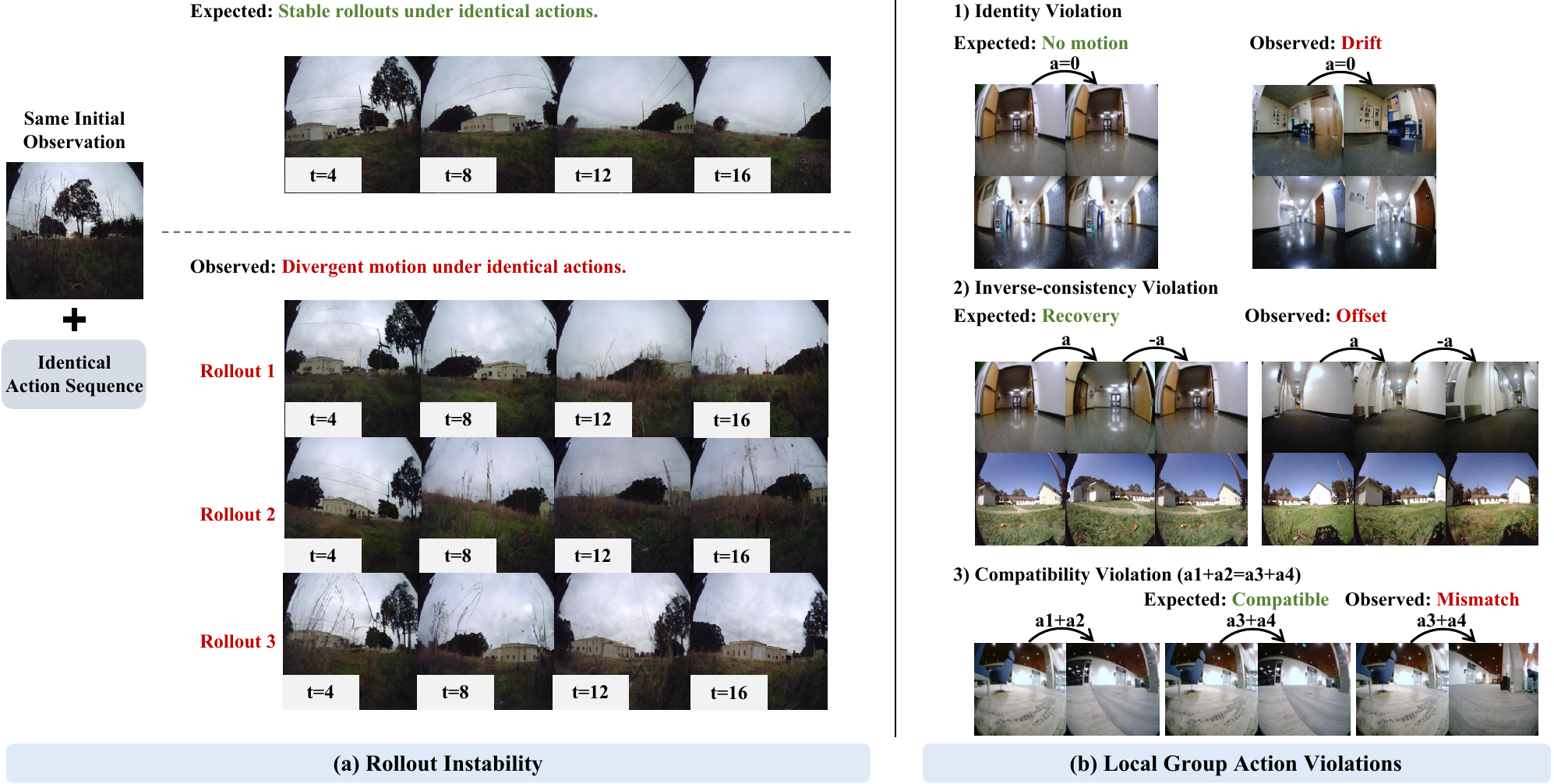}
\caption{
\textbf{Preliminary evidence of action inconsistency.}
(a) Repeated stochastic rollouts from the same initial observation and identical action sequence exhibit divergent motion.
(b) Controlled probes reveal violations of identity, inverse consistency, and compatibility, manifested as zero-action drift, incomplete inverse recovery, and endpoint mismatch under equivalent action decompositions.
}
\label{fig:teaser}
\vspace{-0.6cm}
\end{figure*}




\section{Methodology}
\label{sec:method}

The group-action formulation in Sec.~\ref{sec:ga} provides a principled state-space criterion for action-faithful world modeling, specifying the conditions required for rollouts to remain consistent with motion composition. In principle, these state-space conditions can be directly used as training objectives and evaluation metrics. However, doing so introduces two key challenges:
\begin{itemize}
    \item \textbf{Computational cost:} enforcing these constraints naively requires repeated rollout, decoding, and pose estimation, resulting in substantial computational overhead;
    \item \textbf{Lack of supervision:} there is no explicit training data corresponding to the identity, inverse, and compatibility conditions.
\end{itemize}
To address these challenges, we develop a unified framework comprising three novel components:
(1) we enforce group-action constraints as latent regularization, directly shaping the model's internal transition dynamics without requiring explicit supervision (Sec.~\ref{sec:latent_ga_supervision}); 
(2) we synthesize group-action training signals from existing video world model datasets to approximate supervision for identity, inverse, and compatibility conditions (Sec.~\ref{sec:ga_constraint_synthesis}); 
(3) in addition, we introduce an evaluation protocol in the recovered state space to systematically quantify the extent to which world models adhere to group action structure (Sec.~\ref{sec:ga_metrics}).


\subsection{Latent Group-Action Supervision}
\label{sec:latent_ga_supervision}

The group-action conditions in Sec.~\ref{sec:instantiation} are defined over agent states.
A straightforward training approach would therefore be to decode generated latents into image frames, estimate states with the pose estimator $\Phi$, and compute group-action errors on the recovered trajectories.
While conceptually faithful, this objective is impractical for optimization.
It would repeatedly invoke the decoder and an external pose estimation module inside the multi-step rollout loop, leading to heavy computation and brittle gradient propagation.
More critically, keeping this decode-and-estimate pipeline in the differentiable graph exceeds GPU memory for large video world models, making direct state-space optimization infeasible in our setting.

To address this issue, we propose a training objective that operates directly in the {\it latent space} of video world models.
Let $\phi_{\mathrm{enc}}$ denote the visual encoder.
For an observation $\mathbf{o}_t$, its latent representation is
\begin{equation}
    \mathbf{z}_t
    =
    \phi_{\mathrm{enc}}(\mathbf{o}_t).
\end{equation}
Accordingly, we define the latent group-action training objective as follows.


\paragraph{Latent Group-Action Training Objective.}
Let $D_z(\mathbf{x},\mathbf{y})=\|\mathbf{x}-\mathbf{y}\|_2^2$ denote the latent discrepancy. We design the three components of the training objective according to the group-action conditions (identity, inverse, and compatibility) as follows:
\begin{equation}
    \mathcal{L}_{\mathrm{id}}
    =
    D_z
    \bigl(
        \widehat{\mathbf{z}}^{\mathrm{id}},
        \mathbf{z}_t
    \bigr), \qquad
    \mathcal{L}_{\mathrm{inv}}
    =
    D_z
    \bigl(
        \widehat{\mathbf{z}}^{\mathrm{inv}},
        \mathbf{z}_t
    \bigr), \qquad
    \mathcal{L}_{\mathrm{comp}}
    =
    D_z
    \bigl(
        \widehat{\mathbf{z}}^{A},
        \widehat{\mathbf{z}}^{B}
    \bigr),
\label{eq:ga_latent_losses}
\end{equation}
where the latent endpoints $\widehat{\mathbf{z}}^{\mathrm{id}}$, $\widehat{\mathbf{z}}^{\mathrm{inv}}$, $\widehat{\mathbf{z}}^{A}$ and $\widehat{\mathbf{z}}^{B}$ are defined as:
\begin{equation}
\begin{aligned}
    \widehat{\mathbf{z}}^{\mathrm{id}}
    &=
    \operatorname{End}_{\theta}
    \bigl(
        \mathbf{z}_t,
        \mathbf{u}^{\mathrm{id}}
    \bigr),
    \\
    \widehat{\mathbf{z}}^{\mathrm{inv}}
    &=
    \operatorname{End}_{\theta}
    \bigl(
        \mathbf{z}_t,
        \mathbf{u}^{\mathrm{inv}}
    \bigr),
    \\
    \widehat{\mathbf{z}}^{A},
    \widehat{\mathbf{z}}^{B}
    &=
    \operatorname{End}_{\theta}
    \bigl(
        \mathbf{z}_t,
        \mathbf{u}^A
    \bigr),
    \operatorname{End}_{\theta}
    \bigl(
        \mathbf{z}_t,
        \mathbf{u}^B
    \bigr).
\end{aligned}
\label{eq:ga_latent_endpoints}
\end{equation}
where $\operatorname{End}_{\theta}(\mathbf{z}_t, \cdot)$ denotes the latent endpoint obtained by rolling out the model from $\mathbf{z}_t$ under a given action segment $\mathbf{u}$ using its native transition dynamics.
The overall group-action objective is therefore defined as:
\begin{equation}
    \mathcal{L}_{\mathrm{GA}}
    =
    \lambda_{\mathrm{id}}\mathcal{L}_{\mathrm{id}}
    +
    \lambda_{\mathrm{inv}}\mathcal{L}_{\mathrm{inv}}
    +
    \lambda_{\mathrm{comp}}\mathcal{L}_{\mathrm{comp}},
\end{equation}
The revised diffusion training objective becomes:
\begin{equation}
    \mathcal{L}
    =
    \mathcal{L}_{\mathrm{diff}}
    +
    \lambda_{\mathrm{GA}}\mathcal{L}_{\mathrm{GA}}.
\label{eq:training_objective}
\end{equation}
Eq.~\ref{eq:training_objective} describes the full objective.
To reduce the computation and memory cost of multiple free-running rollouts, we sample one constraint type from
$\{\mathrm{id},\mathrm{inv},\mathrm{comp}\}$ in each batch and optimize the corresponding stochastic estimate of Eq.~\ref{eq:training_objective}.
Here, $\mathcal{L}_{\mathrm{diff}}$ preserves visual realism, while $\mathcal{L}_{\mathrm{GA}}$ regularizes latent dynamics.
Existing video world model datasets do not contain paired trajectories that explicitly realize zero-motion preservation, forward--inverse cancellation, or alternative decompositions of the same local motion.
Therefore, the corresponding group-action relations are synthesized from existing trajectories.

\paragraph{Remark.} The latent variables represent the internal rollout states from which future observations are generated. Constraining their evolution therefore regularizes the model transition before visual decoding. Although the group-action conditions are evaluated in recovered state space, their latent-space counterparts are constructed from the same algebraic relations: identity preservation, inverse cancellation, and local composition consistency. The latent objective should therefore be viewed as an efficient surrogate for state-space regularization, rather than an exact replacement. It encourages the internal rollout dynamics to follow the same structural relations that are later measured on recovered trajectories, while avoiding the memory-prohibitive decode-and-estimate loop during training.

\subsection{Group-Action Supervision Synthesis}
\label{sec:ga_constraint_synthesis}

To address the lack of supervision described above, one approach is to collect additional trajectories that explicitly realize pauses, forward--inverse cycles, or multiple decompositions of the same local displacement. While such data would provide direct supervision, it is costly to acquire and difficult to scale.
We therefore propose a novel strategy to synthesize group-action supervision from existing training trajectories by leveraging the algebraic operations of the action space.
Specifically, given a trajectory from an existing dataset consisting of observations and actions:
\begin{equation}
    (\mathbf{o}_{1:T}, \mathbf{a}_{1:T-1}),
\end{equation}
then, for any $1 \leq t < T-1$, we construct action segments $\mathbf{u}^{\mathrm{id}}_{1:l}$, $\mathbf{u}^{\mathrm{inv}}_{1:2l}$ and ($\mathbf{u}^{A}_{1:l}$, $\mathbf{u}^{B}_{1:l}$), where length $l$ satisfies $t + l - 1 \leq T$, used in Eq.~\ref{eq:ga_latent_endpoints} as follows.

\paragraph{Identity Action Segment.}
The action segment for the identity condition should induce no state change and is implemented as a zero-action segment:
\begin{equation}
    \mathbf{u}^{\mathrm{id}}_{1:l}
    =
    (\mathbf{0},\ldots,\mathbf{0}).
\end{equation}
As shown in Eq.~\ref{eq:ga_latent_losses}, this condition enforces that executing the zero-action segment $\mathbf{u}^{\mathrm{id}}_{1:l}$ preserves the latent state, keeping the endpoint close to $\mathbf{z}_t$.


\paragraph{Inverse Action Segment.}
The inverse condition requires a motion followed by its inverse to recover the starting state.
Therefore, let $\mathbf{u}_{1:l}=(\mathbf{a}_{t},\ldots,\mathbf{a}_{t+l-1})$ be an action segment from the original trajectory, we construct the inverse action segment as a forward--inverse cycle:
\begin{equation}
\mathbf{u}^{\mathrm{inv}}_{1:2l}
    =
    (\mathbf{u}_{1:l}, \hat{\mathbf{u}}_{1:l})
    =
    (\mathbf{a}_{t},\ldots,\mathbf{a}_{t+l-1},-\mathbf{a}_{t+l-1},\ldots,-\mathbf{a}_{t}),
\end{equation}
As shown in Eq.~\ref{eq:ga_latent_losses}, the endpoint should also be close to $\mathbf{z}_t$, as the inverse action segment cancels the forward action.

\paragraph{Compatibility Action Segment.}
The compatibility condition requires that different decompositions of the same intended local motion yield compatible endpoints.
Let $\mathbf{u}^A_{1:l}= (\mathbf{u}^A_{1},\ldots,\mathbf{u}^A_{l})=(\mathbf{a}_{t},\ldots,\mathbf{a}_{t+l-1})$ denote an action segment from the original trajectory. We construct an alternative decomposition $\mathbf{u}^B_{1:l}=(\mathbf{u}^B_{1},\ldots,\mathbf{u}^B_{l})$ that:
\begin{equation}
    \mathbf{u}^B_{i}
    =
    w_i
    \sum_{j=1}^{l}\mathbf{u}^A_{j},
    \qquad
    i=1,\ldots,l .
\label{eq:alternative_decomposition}
\end{equation}
where $\mathbf{w} = (w_1, \ldots, w_l) \sim \mathrm{Dir}(\alpha)$ are non-negative weights sampled from a Dirichlet distribution. By construction, the accumulated increments are identical:
\begin{equation}
    \sum_{i=1}^{l}\mathbf{u}^A_{i}
    =
    \sum_{i=1}^{l}\mathbf{u}^B_{i}.
\label{eq:cumulative_equivalence}
\end{equation}
Accordingly, the loss in Eq.~\ref{eq:ga_latent_losses} enforces that rollouts from the same starting latent under $\mathbf{u}^A_{1:l}$ and $\mathbf{u}^B_{1:l}$ produce matching endpoints.

\subsection{Group-Action Metrics}
\label{sec:ga_metrics}

While group-action conditions are enforced in latent space during training, the resulting dynamics are ultimately assessed through motion in generated videos. We therefore evaluate in the recovered state space.
Following the group-action conditions defined in Sec.~\ref{sec:instantiation}, we introduce two complementary metrics: \emph{Group-Action Consistency} (GAC), which measures violations of these conditions, and \emph{Group-Action Robustness} (GAR), which assesses rollout stability under stochastic generation.

\paragraph{Group-Action Consistency.}
Group-Action Consistency (GAC) measures the extent to which the recovered motion induced by generated rollouts satisfies the group-action conditions.
Specifically, we construct a state-space analogue of Eq.~\ref{eq:ga_latent_losses} and evaluate each condition under a fixed family of controlled probe configurations.
For identity and inverse consistency, a probe configuration is indexed by $(k,l)$, where $k$ denotes the number of inserted or evaluated segments and $l$ denotes the length of each segment.
For local composition consistency, the probe configuration is indexed by the local window length $l$.
Within each configuration, errors are averaged over fixed valid starting positions and evaluation sequences, yielding configuration-level errors
$\Delta_{\mathrm{id}}^{(k,l)}$, $\Delta_{\mathrm{inv}}^{(k,l)}$, and $\Delta_{\mathrm{comp}}^{(l)}$.
Detailed definitions of these configuration-level errors are provided in Appendix~\ref{app:gac_eval_protocol}.
The reported component errors are obtained by averaging over the evaluated probe configurations:
\begin{equation}
    \bar{\Delta}_{\mathrm{id}}
    =
    \frac{1}{|\mathcal{K}_{\mathrm{id}}|}
    \sum_{(k,l)\in\mathcal{K}_{\mathrm{id}}}
    \Delta_{\mathrm{id}}^{(k,l)},
    \;
    \bar{\Delta}_{\mathrm{inv}}
    =
    \frac{1}{|\mathcal{K}_{\mathrm{inv}}|}
    \sum_{(k,l)\in\mathcal{K}_{\mathrm{inv}}}
    \Delta_{\mathrm{inv}}^{(k,l)},
    \;
    \bar{\Delta}_{\mathrm{comp}}
    =
    \frac{1}{|\mathcal{L}_{\mathrm{comp}}|}
    \sum_{l\in\mathcal{L}_{\mathrm{comp}}}
    \Delta_{\mathrm{comp}}^{(l)} .
\label{eq:gac_components}
\end{equation}
These terms measure identity drift, inverse recovery error, and composition mismatch, respectively.
The aggregated GAC error is defined as:
\begin{equation}
    \mathcal{E}_{\mathrm{GAC}}
    =
    \frac{1}{3}
    \left(
    \bar{\Delta}_{\mathrm{id}}
    +
    \bar{\Delta}_{\mathrm{inv}}
    +
    \bar{\Delta}_{\mathrm{comp}}
    \right),
\label{eq:gac_error}
\end{equation}
where a lower $\mathcal{E}_{\mathrm{GAC}}$ indicates stronger adherence to the group-action conditions.

\paragraph{Group-Action Robustness.}
Group-Action Robustness (GAR) measures the stability of motion under stochastic rollout generation.
Given the same initial observation and identical action sequence, we sample $N$ stochastic rollouts and recover their state trajectories
$\{\mathbf{s}^{(i)}_{1:T}\}_{i=1}^{N}$.
The GAR error is then defined as:
\begin{equation}
    \mathcal{E}_{\mathrm{GAR}}
    =
    \frac{2}{N(N-1)}
    \sum_{1\leq i<j\leq N}
    \frac{1}{T}
    \sum_{t=1}^{T}
    d
    \bigl(
        \mathbf{s}^{(i)}_{t},
        \mathbf{s}^{(j)}_{t}
    \bigr).
\label{eq:gar_error}
\end{equation}
where lower $\mathcal{E}_{\mathrm{GAR}}$ indicates that repeated stochastic rollouts remain more consistent under the same action sequence.


Additional details on free-running latent rollout, constraint synthesis and sampling, action-space approximations, and the practical per-batch training objective are provided in Appendix~\ref{sec:appendix_ga_training}.

\section{Experiments}
\label{sec:exp}

\subsection{Experimental Setup}
\label{sec:exp_setup}

Our experiments are conducted on the RECON~\cite{shah2021recon}, SCAND~\cite{karnan2022scand}, and HuRoN~\cite{hirose2023sacson} datasets, following the preprocessing protocol of NWM~\cite{bar2025nwm}. 
We report the main results on RECON, with additional results on SCAND and HuRoN provided in Appendix~\ref{app:additional_results}. 
We use NWM~\cite{bar2025nwm} as the primary testbed and include DIAMOND~\cite{alonso2024diamond} as an additional baseline. 
For GAC evaluation, we use a fixed set of controlled probes shared by all methods.
For identity and inverse probes, configurations are indexed by the number of local segments $k$ and the segment length $l$.
Increasing $k$ tests whether violations accumulate when the same local relation is applied repeatedly, while increasing $l$ tests whether violations grow with the temporal extent of each segment.
We vary one factor while fixing the other, using $k,l\in\{1,3,5\}$ for the main RECON analysis.
For local composition probes, we vary the window length with $l\in\{2,4,6\}$, since the probe compares two decompositions of the same local motion.
Within each configuration, errors are averaged over fixed valid starting positions and evaluation sequences.
For GA training, the local rollout horizon is sampled from $l\sim\mathcal{U}\{1,\ldots,L\}$ with $L=4$ by default, and one group-action constraint type is sampled per batch for efficiency.
Evaluation is conducted using the metrics defined in Sec.~\ref{sec:ga_metrics}.
Additional details of dataset statistics, model adaptation, training hyper-parameters, pose recovery, GAC probe configurations, GAR rollout settings, and metric aggregation procedures are provided in Appendix~\ref{app:exp_details}.

\begin{table*}[!ht]
\caption{
Group-Action Consistency (GAC) errors on the RECON dataset.
Columns 2--4 report the individual error terms, and the last column reports the aggregated GAC error $E_{\mathrm{GAC}}$.
``+GA'' denotes applying the proposed GA regularization.
$E_{\mathrm{GAC}}$ is computed from the mean component errors.
}
\label{tab:gac_aggregate_recon}
\centering
\setlength{\tabcolsep}{7pt}
\renewcommand{\arraystretch}{1.08}
\begin{tabular}{lcccc}
\toprule
Method
& $\bar{\Delta}_{\mathrm{id}} \downarrow$
& $\bar{\Delta}_{\mathrm{inv}} \downarrow$
& $\bar{\Delta}_{\mathrm{comp}} \downarrow$
& $\mathcal{E}_{\mathrm{GAC}} \downarrow$ \\
\midrule

DIAMOND~\cite{alonso2024diamond}
& $2.47 \pm 1.16$
& $2.38 \pm 1.08$
& $1.12 \pm 0.70$
& $1.99$ \\

DIAMOND + GA
& $2.25 \pm 1.03$
& $2.16 \pm 0.97$
& $0.86 \pm 0.55$
& $1.76$ \\

NWM~\cite{bar2025nwm}
& $2.10 \pm 0.99$
& $2.29 \pm 0.99$
& $0.79 \pm 0.52$
& $1.73$ \\

NWM + GA
& $\mathbf{1.95 \pm 0.86}$
& $\mathbf{1.95 \pm 0.84}$
& $\mathbf{0.60 \pm 0.40}$
& $\mathbf{1.50}$ \\



\bottomrule
\end{tabular}
\end{table*}

\subsection{Group-Action Consistency Results}
\label{sec:exp_gac}

Table~\ref{tab:gac_aggregate_recon} reports the Group-Action Consistency (GAC) errors on RECON.
Compared with their GA-regularized counterparts, both baseline models show consistently larger errors across identity, inverse, and local composition consistency.
This indicates that strong visual rollout quality does not by itself guarantee adherence to the underlying group-action structure.
Applying GA regularization reduces the aggregate GAC error for both DIAMOND and NWM, with gains observed across all three components.
These results suggest that the proposed objective improves action-faithful dynamics beyond a single backbone architecture.
Fig.~\ref{fig:asc_trends} provides a probe-wise view of the same effect.
Across most probe settings, GA regularization shifts the error curves downward and reduces the standard deviation indicated by the marker size.
The improvement is particularly clear for inverse and local composition consistency, where errors tend to increase when the model must compose multiple action-conditioned transitions.
For identity consistency, the curve is less monotonic because repeated or longer zero-action segments can reinforce the stationary condition, but larger settings impose a stricter preservation requirement across more pause locations or longer pause durations.
Additional GAC results and probe-wise analysis are provided in Appendices~\ref{app:gac_more} and~\ref{app:gac_probe_analysis}.

\begin{figure}[!ht]
    \centering
    \includegraphics[width=\columnwidth]{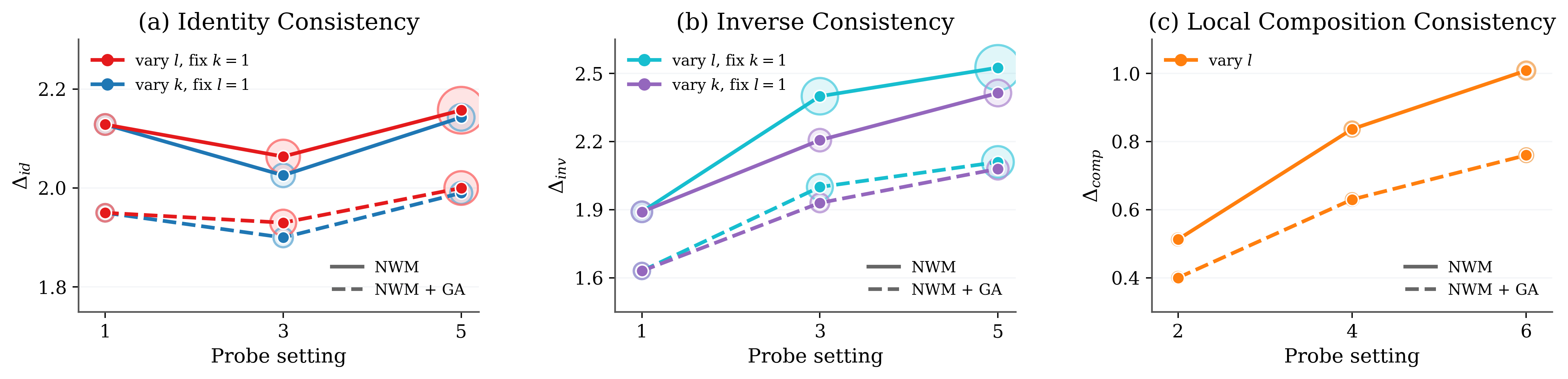}
    \vspace{-0.6cm}
    \caption{
    Group-Action Consistency (GAC) errors on RECON under different probe configurations.
    For identity and inverse probes, $k$ and $l$ denote the number and length of probed segments; for composition probes, $l$ denotes the window length.
    Markers indicate mean errors, with size proportional to the standard deviation.
    Lower values indicate stronger group-action consistency.
    }
    \label{fig:asc_trends}
    \vspace{-0.4cm}
\end{figure}

\begin{table*}[!ht]
\caption{
Group-Action Robustness (GAR) errors on the RECON dataset.
``Aligned'' removes global pose drift before computing trajectory error, while ``Non-aligned'' measures raw trajectory deviation.
}
\label{tab:trajectory_consistency_recon_all}
\centering
\setlength{\tabcolsep}{7pt}
\renewcommand{\arraystretch}{1.08}
\begin{tabular}{lcccc}
\toprule
\multirow{2}{*}{Method} & \multicolumn{2}{c}{16 Frames} & \multicolumn{2}{c}{64 Frames} \\
\cmidrule(lr){2-3}\cmidrule(lr){4-5}
& Aligned $\downarrow$ & Non-aligned $\downarrow$ & Aligned $\downarrow$ & Non-aligned $\downarrow$ \\
\midrule

DIAMOND~\cite{alonso2024diamond}
& $0.44 \pm 0.36$
& $0.89 \pm 0.58$
& $0.93 \pm 0.67$
& $2.48 \pm 1.91$ \\

DIAMOND + GA
& $0.37 \pm 0.29$
& $0.74 \pm 0.49$
& $0.76 \pm 0.55$
& $2.03 \pm 1.46$ \\

NWM~\cite{bar2025nwm}
& $0.32 \pm 0.26$
& $0.65 \pm 0.41$
& $0.62 \pm 0.44$
& $1.52 \pm 1.27$ \\

NWM + GA
& $\mathbf{0.26 \pm 0.20}$
& $\mathbf{0.49 \pm 0.35}$
& $\mathbf{0.46 \pm 0.33}$
& $\mathbf{1.28 \pm 0.82}$ \\


\bottomrule
\end{tabular}
\vspace{-0.4cm}
\end{table*}

\subsection{Group-Action Robustness Results}
\label{sec:exp_gar}

Table~\ref{tab:trajectory_consistency_recon_all} reports GAR errors on RECON, with additional results on SCAND and HuRoN provided in Appendix~\ref{app:gar_more}.
The errors are computed from repeated rollouts $(N=5)$ generated from the same initial observation and action sequence drawn from the test set.
It can be observed that the baseline NWM exhibits clear horizon-dependent degradation. In particular, its non-aligned error increases from $0.65$ at 16 frames to $1.52$ at 64 frames, indicating that stochastic variations accumulate into substantial global trajectory drift over longer rollouts. 
In contrast, GA training consistently reduces GAR error across both horizons and alignment settings, with more pronounced improvements at 64 frames where accumulated transition errors are larger. 
The reduction in non-aligned error indicates that stochastic rollouts become more consistent in their global motion under the same action sequence, rather than only after trajectory alignment.
Overall, these results demonstrate that the proposed GA regularization enhances robustness to stochastic rollout variability, especially over longer horizons.

\begin{table*}[!ht]
\vspace{-0.4cm}
\caption{
Ablation of group-action constraints on RECON using NWM as the base model. Single-loss rows apply only the indicated group-action constraint. Full GA uses all three constraints. GAC reports local group-action consistency, while GAR reports rollout-level robustness at 64 frames. Lower is better.
}
\label{tab:ga_ablation_recon}
\centering
\setlength{\tabcolsep}{5.5pt}
\renewcommand{\arraystretch}{1.06}
\resizebox{\textwidth}{!}{
\begin{tabular}{lcccccc}
\toprule
\multirow{2}{*}{Method}
& \multicolumn{4}{c}{GAC}
& \multicolumn{2}{c}{GAR 64 Frames} \\
\cmidrule(lr){2-5}\cmidrule(lr){6-7}
& $\bar{\Delta}_{\mathrm{id}} \downarrow$
& $\bar{\Delta}_{\mathrm{inv}} \downarrow$
& $\bar{\Delta}_{\mathrm{comp}} \downarrow$
& $\mathcal{E}_{\mathrm{GAC}} \downarrow$
& Aligned $\downarrow$
& Non-aligned $\downarrow$ \\
\midrule

Baseline
& $2.10 \pm 0.99$
& $2.29 \pm 0.99$
& $0.79 \pm 0.52$
& $1.73$
& $0.62 \pm 0.44$
& $1.52 \pm 1.27$ \\

$\mathcal{L}_{\mathrm{id}}$
& $1.93 \pm 0.87$
& $2.18 \pm 0.96$
& $0.79 \pm 0.50$
& $1.62$
& $0.57 \pm 0.41$
& $1.44 \pm 1.16$ \\

$\mathcal{L}_{\mathrm{inv}}$
& $2.03 \pm 0.94$
& $1.99 \pm 0.85$
& $0.73 \pm 0.48$
& $1.58$
& $0.53 \pm 0.38$
& $1.38 \pm 1.06$ \\

$\mathcal{L}_{\mathrm{comp}}$
& $2.06 \pm 0.95$
& $2.08 \pm 0.90$
& $0.62 \pm 0.41$
& $1.59$
& $0.50 \pm 0.35$
& $1.33 \pm 0.94$ \\

Full GA
& $\mathbf{1.95 \pm 0.86}$
& $\mathbf{1.95 \pm 0.84}$
& $\mathbf{0.60 \pm 0.40}$
& $\mathbf{1.50}$
& $\mathbf{0.46 \pm 0.33}$
& $\mathbf{1.28 \pm 0.82}$ \\

\bottomrule
\end{tabular}
}
\end{table*}

\subsection{Ablation Analysis}
\label{sec:exp_ablation}

Table~\ref{tab:ga_ablation_recon} presents an ablation of the three group-action conditions defined in Eq.~\ref{eq:ga_latent_losses}. Each individual condition improves over the NWM baseline in both GAC and GAR, indicating that identity preservation, inverse cancellation, and compatibility enforcement each address distinct failure modes in world model dynamics. 
The full objective achieves the best overall performance, suggesting that the three conditions are complementary. 
Notably, the GAC results show that improvements are not strictly confined to the directly targeted component: optimizing inverse or compatibility also reduces other errors, reflecting coupling among the conditions through shared rollout dynamics. 
The GAR ablation results further show that group-action consistency improvements translate into more stable repeated rollouts. Among the individual terms, the compatibility term yields the largest gains at longer horizons, consistent with its role in multi-step dynamics. Identity and inverse constraints also contribute by reducing stationary drift and residual motion after cancellation.

Additional ablations on rollout span, loss weight, and free-running versus teacher-forced training are provided in Appendix~\ref{app:ablation_more}.


\begin{figure*}[!ht]
    \centering
    \includegraphics[width=\textwidth]{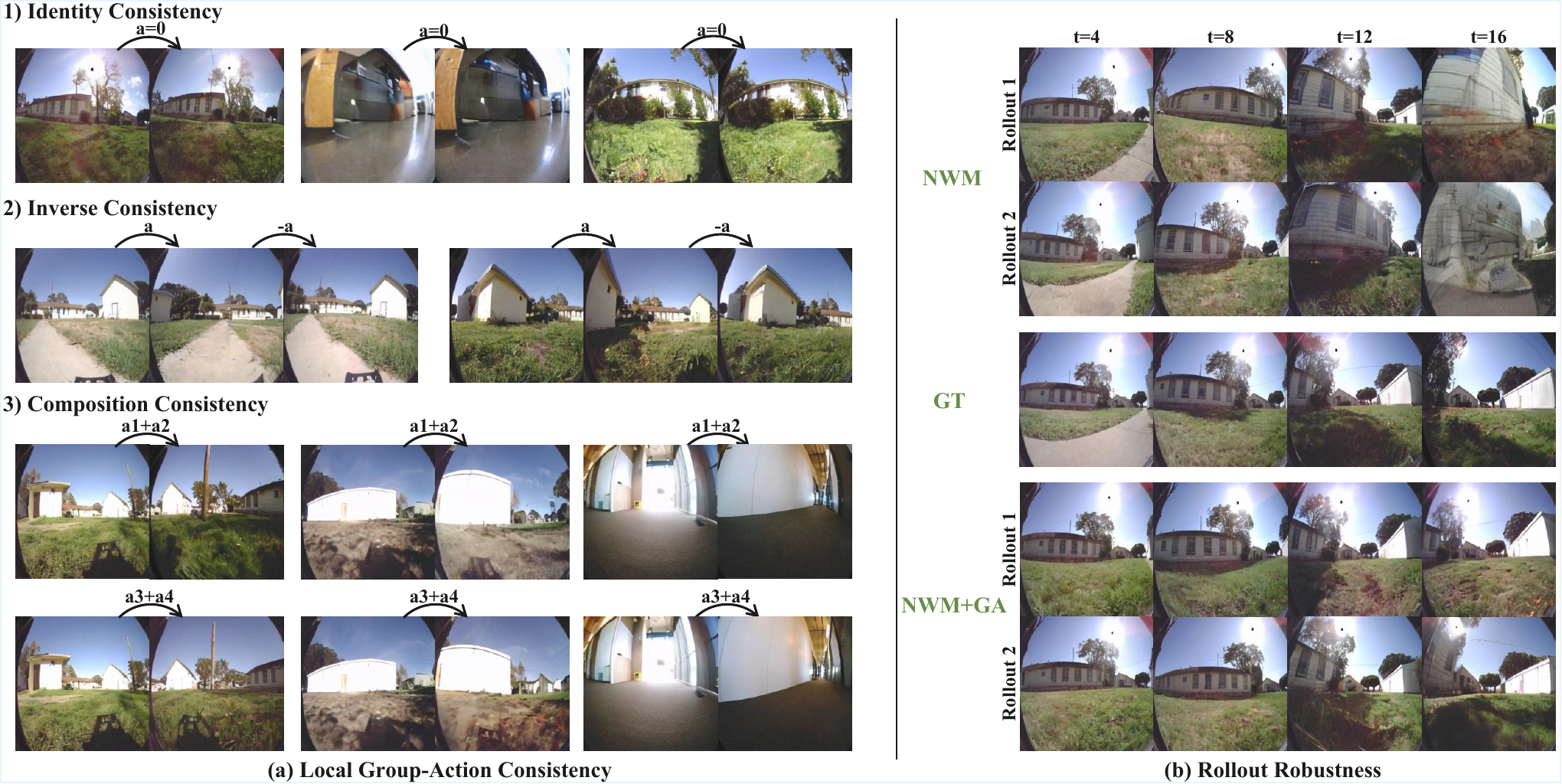}
    \vspace{-0.4cm}
    \caption{
    Qualitative results for Group-Action Consistency (GAC) and Group-Action Robustness (GAR).
    (a) Representative examples of identity, inverse, and compatibility probes, where GA regularization improves local action consistency.
    (b) Repeated stochastic rollouts from the same initial observation and identical action sequence.
    Although visual appearance varies across samples, the GA-regularized model exhibits more consistent motion evolution.
    }
    \label{fig:qualitative_main}
    \vspace{-0.4cm}
\end{figure*}

\subsection{Qualitative Results}
\label{sec:exp_qual}

Fig.~\ref{fig:qualitative_main} presents qualitative results for both evaluation metrics. For GAC, the GA-regularized model better preserves stationary observations under zero action, more accurately recovers after forward--inverse segments, and produces more consistent outcomes under locally equivalent decompositions. For GAR, although samples generated from the same initial observation and action sequence may still vary in appearance, their motion evolution remains more consistent. Overall, these results corroborate the quantitative findings, showing that the proposed constraints improve action faithfulness while maintaining visual realism.
Additional qualitative examples are provided in Appendix~\ref{app:qualitative_appendix}.
A complementary image-quality evaluation in Appendix~\ref{app:image_quality} further shows that these consistency gains do not come at the cost of perceptual generation quality.


\section{Conclusion}
In this work, we argue that consistency with the compositional structure of actions provides a principled criterion for assessing action faithfulness in video world models beyond visual realism. By formalizing world modeling as a group action, we provide a principled framework to diagnose and understand failures of existing video world models. We further introduce a unified method to enforce these structural constraints in latent space, along with metrics that evaluate both group action consistency and rollout stability. Extensive experimental results show that our approach consistently improves action-consistent dynamics while preserving visual quality.

\newpage
\bibliographystyle{unsrt}
\bibliography{reference}

\appendix


\section*{Appendix}

\paragraph{Appendix organization.}
This appendix provides supplementary material for the theoretical formulation, implementation details, evaluation protocol, additional results, ablations, qualitative examples, related work, and broader discussion.
Appendix~\ref{sec:appendix_group_action} gives the proof of the group-action formulation and clarifies its connection to $SE(2)$ navigation.
Appendix~\ref{sec:appendix_ga_training} details the latent rollout objective, constraint synthesis, action-space approximation, and the relation between latent training and recovered-state evaluation.
Appendix~\ref{app:exp_details} describes datasets, model variants, training setup, pose recovery, and metric protocols.
Appendices~\ref{app:additional_results}--\ref{app:qualitative_appendix} provide additional quantitative results, ablations, and qualitative examples.
Appendix~\ref{sec:appendix_relatedwork} reviews related work, and Appendix~\ref{app:discussion} discusses scope, broader implications, and limitations.

\section{Additional Details of the Group-Action Formulation}
\label{sec:appendix_group_action}

This appendix provides additional details for the group-action formulation in Sec.~\ref{sec:ga}.
We first prove Theorem~\ref{th:group_action}, then discuss why $SE(2)$ is the natural group for planar navigation, and finally clarify how the exact group-action ideal is operationalized through local observable conditions in video world models.

\subsection{Proof of Theorem~\ref{th:group_action}}
\label{app:proof_group_action}

\begin{proof}
We use the action-order convention adopted in Sec.~\ref{sec:ga}, where applying $a_1$ followed by $a_2$ corresponds to the composed action $a_1\circ a_2$.
Under this convention, the compatibility condition
\begin{equation}
    f_\theta(f_\theta(s,a_1),a_2)
    =
    f_\theta(s,a_1\circ a_2)
\end{equation}
and the identity condition
\begin{equation}
    f_\theta(s,e)=s
\end{equation}
are exactly the axioms of a right group action of $G$ on $\mathcal{S}$.
Therefore, $f_\theta$ defines a group action under this convention.

It remains to show the finite-sequence property.
For $n=1$, the statement is trivial.
Assume that for some $n-1$,
\begin{equation}
    f_\theta(\cdots f_\theta(f_\theta(s,a_1),a_2)\cdots,a_{n-1})
    =
    f_\theta(s,a_1\circ\cdots\circ a_{n-1}).
\end{equation}
Applying $a_n$ and using compatibility gives
\begin{align}
    &f_\theta(
        f_\theta(s,a_1\circ\cdots\circ a_{n-1}),
        a_n
    )
    \nonumber\\
    &\qquad =
    f_\theta(
        s,
        (a_1\circ\cdots\circ a_{n-1})\circ a_n
    )
    \nonumber\\
    &\qquad =
    f_\theta(
        s,
        a_1\circ\cdots\circ a_n
    ),
\end{align}
where the last equality follows from associativity of the group operation.
Thus, by induction,
\begin{equation}
    f_\theta(\cdots f_\theta(f_\theta(s,a_1),a_2)\cdots,a_n)
    =
    f_\theta(s,a_1\circ\cdots\circ a_n).
\end{equation}

The inverse condition follows from the group-action axioms.
Indeed, for any $a\in G$, compatibility and identity imply
\begin{equation}
    f_\theta(f_\theta(s,a),a^{-1})
    =
    f_\theta(s,a\circ a^{-1})
    =
    f_\theta(s,e)
    =
    s.
\end{equation}
We state inverse consistency explicitly in the main text because it is one of the most direct observable consequences of action cancellation in navigation rollouts.
\end{proof}

\subsection{Why $SE(2)$ for Navigation}
\label{app:why_se2}

In embodied navigation, actions typically represent local ego-motion increments in the plane.
An idealized planar motion can be written as an element of the special Euclidean group $SE(2)$:
\begin{equation}
    g
    =
    (R,\mathbf{p}),
    \qquad
    R\in SO(2),\quad
    \mathbf{p}\in\mathbb{R}^2,
\end{equation}
or equivalently as the homogeneous transformation
\begin{equation}
    G
    =
    \begin{bmatrix}
        R & \mathbf{p}\\
        \mathbf{0}^{\top} & 1
    \end{bmatrix}.
\end{equation}
The group operation is rigid-motion composition:
\begin{equation}
    (R_1,\mathbf{p}_1)\circ(R_2,\mathbf{p}_2)
    =
    (R_1R_2,\mathbf{p}_1+R_1\mathbf{p}_2).
\end{equation}
The identity element is
\begin{equation}
    e=(I,\mathbf{0}),
\end{equation}
and the inverse of a motion is
\begin{equation}
    (R,\mathbf{p})^{-1}
    =
    (R^\top,-R^\top\mathbf{p}).
\end{equation}

This group structure captures the algebraic relations that navigation actions should satisfy.
A zero motion should preserve the state, a motion followed by its inverse should cancel, and a sequence of motions should have an effect determined by their composition.
The group-action formulation in Sec.~\ref{sec:ga} uses this structure as the ideal state-space criterion for action-faithful world modeling.

In practical video world models, actions are often represented by normalized local ego-motion increments such as
\begin{equation}
    \mathbf{a}_t = (\Delta x_t,\Delta y_t,\Delta \theta_t).
\end{equation}
The main text uses this ego-motion action parameterization to instantiate observable local consequences of the $SE(2)$ group-action ideal.
The exact action-space implementation used for training is described separately in Appendix~\ref{sec:appendix_ga_training}.

\subsection{From Exact Group Action to Observable Local Conditions}
\label{app:observable_conditions}

The group-action formulation defines an ideal state-space property.
If the state $\mathbf{s}_t$ were directly observable and the transition map were deterministic, one could test whether
\begin{equation}
\begin{aligned}
    f_\theta(f_\theta(s,a_1),a_2)
    &=
    f_\theta(s,a_1\circ a_2),
    \\
    f_\theta(s,e)
    &=s,
    \\
    f_\theta(f_\theta(s,a),a^{-1})
    &=s,
\end{aligned}
\end{equation}
hold exactly.
In stochastic video rollout, however, the underlying state is not directly observed.
The model produces visual observations, and the induced motion must be recovered through an estimator $\Phi$.
Exact equality in the abstract state space is therefore not directly available as an evaluation target.

The main paper consequently focuses on local observable consequences of the group-action ideal.
The identity condition becomes zero-action preservation: a rollout under the neutral action should not introduce spurious motion.
The inverse condition becomes recovery after cancellation: executing a motion and then its inverse should return to the starting state.
The compatibility condition becomes local composition consistency: different short action decompositions with the same intended composed motion should induce compatible endpoints.

These conditions are not introduced as task-specific heuristics.
They are operational tests of whether the model-induced rollout preserves the basic algebraic consequences of planar motion composition.
Their violations reveal failure modes that can remain hidden when evaluating only visual plausibility.

\subsection{Scope of the Local Composition Approximation}
\label{app:local_composition_scope}

The group-action formulation is grounded in the SE(2) structure of planar rigid motions.
Exact SE(2) composition is generally non-commutative:
the effect of a translation followed by a rotation is not identical to the effect of the same rotation followed by the same translation.
Therefore, the composition condition used in this paper should not be interpreted as assuming that SE(2) composition is commutative, nor as claiming that arbitrary trajectories with the same cumulative displacement are equivalent.

Our use of composition consistency is a local operational approximation for short ego-motion windows.
In the datasets and training setup considered here, actions are represented as normalized local ego-motion increments.
For short windows with small rotations, redistributing the same intended local displacement provides a practical probe for whether the model is overly sensitive to the temporal realization of a local motion segment.
The goal is not to replace the exact SE(2) operation with a globally valid additive rule.
Rather, the approximation asks whether two short action decompositions with matched intended motion induce compatible endpoint states.

This distinction is important for interpreting both the metric and the training objective.
The composition probe should be read as a local consistency test for action-conditioned rollout, not as a statement that all trajectories with equal cumulative displacement should coincide.
Accordingly, all composition probes are restricted to short windows, and the window lengths used in evaluation are chosen to remain within this local regime.

\section{Additional Details on Group-Action Training}
\label{sec:appendix_ga_training}

This appendix provides implementation details for the group-action training objective introduced in Sec.~\ref{sec:method}.
We describe the free-running latent rollout setting, the synthesis of training constraints from existing trajectories, the action-space approximations used during training, the practical per-batch objective, and the relation between latent training and recovered-state evaluation.
Evaluation protocols and metric aggregation are described separately in Appendix~\ref{app:exp_details}.

\subsection{Free-Running Latent Rollout}
\label{app:free_running_latent_rollout}

The group-action objective is computed on latent endpoints generated by the model's own rollout process.
Starting from a sampled latent state $\mathbf{z}_t$, the model is rolled out under a synthesized action segment without injecting ground-truth observations inside the constrained interval.
This setting matches the object being regularized, namely the action-conditioned transition induced by the model itself.

There are two possible rollout paradigms for applying such constraints.
Teacher-forced rollout supplies ground-truth visual context at intermediate steps, while free-running rollout lets the model condition on its own generated latent states.
Since the identity, inverse, and composition conditions characterize how the learned transition behaves under self-composition, we adopt free-running latent rollout inside the constrained interval.
This allows the group-action objective to act directly on the internal dynamics used during generation and exposes violations that teacher forcing may hide.

For memory efficiency and training stability, gradients are propagated only through the sampled local interval.
Latent states before the interval are detached from the computation graph.
This truncated rollout is used as an optimization device and does not change the definition of the group-action constraints.
We examine the effect of this free-running design in the ablation study.

\subsection{Training Constraint Synthesis and Per-Batch Sampling}
\label{app:ga_constraint_synthesis_details}

For each training trajectory, a starting index $t$ is sampled from valid temporal positions.
The local rollout horizon is sampled as
\begin{equation}
    l \sim \mathcal{U}\{1,\ldots,L\},
\end{equation}
where $L$ controls the maximum span over which a group-action constraint is imposed.
Short horizons emphasize immediate transition behavior, while longer horizons expose errors that appear only after repeated transition composition.
Unless otherwise specified, $L=4$ is used in our experiments.

Given the native local action segment $\mathbf{u}_{1:l}$ defined in Sec.~\ref{sec:ga_constraint_synthesis}, the identity, inverse, and composition constraints can in principle all be evaluated for the same sampled segment.
In practice, computing all three constraints in every batch increases memory and computation because each constraint requires a separate latent rollout.
We therefore sample one constraint type per training batch.
The active group-action loss is one of
\begin{equation}
    \mathcal{L}_{\mathrm{GA}}^{(b)}
    \in
    \{
    \mathcal{L}_{\mathrm{id}},
    \mathcal{L}_{\mathrm{inv}},
    \mathcal{L}_{\mathrm{comp}}
    \}.
\end{equation}
This corresponds to a stochastic realization of the full objective in Eq.~\ref{eq:training_objective}: in each batch, only the sampled group-action term is activated, while the other two terms have zero effective weight.
Equivalently, the per-batch objective can be written as
\begin{equation}
    \mathcal{L}
    =
    \mathcal{L}_{\mathrm{diff}}
    +
    \lambda \mathcal{L}_{\mathrm{GA}}^{(b)}.
\end{equation}
Across training, this stochastic sampling exposes the model to all three group-action constraints while keeping the rollout-based objective tractable.

\subsection{Action-Space Approximation for Training}
\label{app:action_inverse_composition}

The theoretical formulation in Sec.~\ref{sec:ga} is defined in terms of the group operation of planar rigid motions.
In the video world models considered in this work, however, actions are provided as normalized local ego-motion increments.
The training constraints are therefore instantiated under this action parameterization, following the operational form used in Sec.~\ref{sec:instantiation}.

For the inverse constraint, let
\begin{equation}
    \mathbf{u}_{1:l}
    =
    (\mathbf{a}_{t},\ldots,\mathbf{a}_{t+l-1})
\end{equation}
be a local action segment.
The reverse part is constructed by reversing the temporal order and negating each normalized increment:
\begin{equation}
    \hat{\mathbf{u}}_{1:l}
    =
    (-\mathbf{a}_{t+l-1},\ldots,-\mathbf{a}_{t}).
\end{equation}
The full inverse training segment is then the forward--inverse cycle
\begin{equation}
    \mathbf{u}^{\mathrm{inv}}_{1:2l}
    =
    (\mathbf{u}_{1:l}, \hat{\mathbf{u}}_{1:l})
    =
    (\mathbf{a}_{t},\ldots,\mathbf{a}_{t+l-1},
    -\mathbf{a}_{t+l-1},\ldots,-\mathbf{a}_{t}).
\end{equation}
This implements the cancellation relation used by the inverse constraint under the normalized ego-motion parameterization.

For the composition constraint, exact equality under $SE(2)$ composition would require composing rigid motions in the full group representation.
Instead, consistent with the main text, we construct local composition probes by accumulating normalized action increments over short ego-motion windows.
Given a segment $\mathbf{u}^{A}_{1:l}$, the alternative segment $\mathbf{u}^{B}_{1:l}$ is constructed as in Eq.~\ref{eq:alternative_decomposition}, and satisfies
\begin{equation}
    \sum_{i=1}^{l} \mathbf{u}^{A}_{i}
    =
    \sum_{i=1}^{l} \mathbf{u}^{B}_{i}.
\end{equation}
Thus, the two segments preserve the same accumulated increments while changing the temporal allocation of the intended local motion.

This approximation is used only for local constraint synthesis.
It should not be interpreted as replacing exact $SE(2)$ composition with a globally valid additive rule.
Rather, it provides an efficient way to construct alternative short-horizon decompositions from existing trajectories, enabling the model to be regularized against sensitivity to different temporal realizations of the same intended local motion.

The Dirichlet redistribution in Eq.~\ref{eq:alternative_decomposition} controls the diversity of these alternative decompositions.
The concentration parameter $\alpha$ determines how evenly the accumulated increments are distributed across the segment.
Smaller values produce more uneven allocations, while larger values produce smoother redistributions.

\subsection{Latent Training versus State-Space Evaluation}
\label{app:latent_vs_state}

Latent training and recovered-state evaluation serve different roles.
The group-action objective is imposed in latent space because this is where the video world model composes its internal rollout dynamics before decoding future observations.
Applying the same objective directly in recovered state space would require decoding generated frames and running an external pose estimator inside the training loop, which is computationally expensive and unstable for gradient-based optimization.

Recovered-state evaluation is used because action faithfulness is ultimately a property of induced motion.
Recovered trajectories provide an interpretable representation for measuring identity drift, inverse recovery error, composition mismatch, and repeated-rollout dispersion.

The method does not assume that latent distance is identical to recovered-state distance.
The latent losses and the recovered-state metrics are both derived from the same group-action conditions.
The former provide a practical optimization signal, while the latter measure whether generated rollouts exhibit the expected motion-level consequences.

\section{Additional Experimental Details}
\label{app:exp_details}

This appendix provides additional details for the experiments in Sec.~\ref{sec:exp}.
We describe the datasets, model variants, training setup, pose recovery, GAC evaluation protocol, GAC probe design and aggregation, and GAR rollout protocol.

\subsection{Datasets}
\label{app:datasets}

Experiments are conducted on RECON~\cite{shah2021recon}, SCAND~\cite{karnan2022scand}, and HuRoN~\cite{hirose2023sacson}, following the preprocessing and data split protocol of NWM~\cite{bar2025nwm}.
Relative actions are derived from ground-truth pose differences and normalized so that action scales remain comparable across datasets.

\paragraph{RECON.}
RECON is an outdoor robotics dataset collected using a Clearpath Jackal UGV platform.
It contains approximately 40 hours of data across 9 open-world environments.
Following NWM~\cite{bar2025nwm}, 9,468 video segments are used for training and 2,367 video segments for testing.
Its emphasis on long-range traversal in visually diverse open-world environments makes it particularly suitable for evaluating whether rollout instability accumulates over extended action sequences.

\paragraph{SCAND.}
SCAND is a robotics dataset of socially compliant navigation demonstrations collected using a wheeled Clearpath Jackal and a legged Boston Dynamics Spot in both indoor and outdoor settings at UT Austin.
The full dataset contains 8.7 hours of data, 138 trajectories, and 25 miles of demonstrations.
Following NWM~\cite{bar2025nwm}, 484 video segments are used for training and 121 video segments for testing.
Compared with RECON, SCAND contains richer local interaction structure and denser socially conditioned motion variation, making it useful for evaluating whether local group-action consistency is preserved in socially complex settings.

\paragraph{HuRoN.}
HuRoN is a robotics dataset of social navigation interactions collected using a Robot Roomba in indoor environments at UC Berkeley.
The dataset contains more than 75 hours of data across 5 environments and includes over 4,000 human interactions.
Following NWM~\cite{bar2025nwm}, 2,451 video segments are used for training and 613 video segments for testing.
Its emphasis on socially unobtrusive navigation makes it relevant for evaluating whether action-conditioned rollout remains stable in interaction-heavy and behaviorally sensitive environments.

\subsection{Model Variants and Adaptation}
\label{app:model_adaptation}

NWM~\cite{bar2025nwm} is used as the main testbed throughout the experiments.
It is a representative diffusion-based navigation world model with stochastic action-conditioned rollout, making it suitable for studying whether explicit group-action constraints improve rollout stability.
Following its default setting, a CDiT-XL backbone with approximately 1B parameters is used together with a 4-frame context window.
Visual observations are tokenized using the Stable Diffusion VAE~\cite{blattmann2023svd}, following the design adopted in NWM.

DIAMOND~\cite{alonso2024diamond} is included as an additional action-conditioned world-model baseline.
Since DIAMOND was originally introduced in a different domain, we adapt its public implementation to the embodied navigation setting using a training setup similar in spirit to NWM.
This additional baseline is included to test whether the proposed GA framework generalizes beyond a single world-model architecture.

\subsection{Training Setup}
\label{app:training_setup}

Optimization is performed with AdamW~\cite{adamw}.
For GA training, full fine-tuning is used and the initial learning rate is set to $1\times10^{-6}$.
Training is conducted on 8 A800 GPUs with a per-GPU batch size of 1.
Unless otherwise specified, the remaining optimization settings follow the corresponding base model as closely as possible.
The group-action objective is applied only during fine-tuning.
The maximum local rollout horizon is set to $L=4$ by default.
The group-action loss weight $\lambda$ is selected from $\{0.1,0.5,1.0\}$, and $\lambda=0.5$ is used by default.
As described in Appendix~\ref{app:ga_constraint_synthesis_details}, one group-action constraint type is sampled in each training batch for efficiency.

\subsection{Pose Recovery and Trajectory Alignment}
\label{app:pose_recovery}

Generated videos are mapped to recovered state trajectories using the pose estimator $\Phi$ defined in Sec.~\ref{sec:instantiation}, implemented with the off-the-shelf DROID-SLAM~\cite{teed2021droid} toolkit.
All motion-level errors are computed using the state distance $d(\cdot,\cdot)$ in Eq.~\ref{eq:state_distance}.

For GAR evaluation, both aligned and non-aligned trajectory errors are reported.
Aligned errors remove global pose drift before computing trajectory discrepancy, while non-aligned errors measure raw deviation in the recovered trajectory.
The aligned setting emphasizes local trajectory shape, whereas the non-aligned setting additionally reflects global drift in position and heading.

\subsection{GAC Evaluation Protocol}
\label{app:gac_eval_protocol}

This subsection specifies the evaluation configurations used to compute the Group-Action Consistency (GAC) error in Sec.~\ref{sec:ga_metrics}.
Generated videos are first mapped to recovered state trajectories using $\Phi$.
The main text defines $\Delta_{\mathrm{id}}$, $\Delta_{\mathrm{inv}}$, and $\Delta_{\mathrm{comp}}$ as the averaged errors for identity, inverse, and local composition consistency.
Here we spell out the corresponding probe families.

\paragraph{Identity evaluation.}
Identity consistency is evaluated by inserting zero-action segments into the action stream.
Let $\mathcal{T}_{\mathrm{id}}^{(k,l)}$ denote the set of evaluated zero-action segments, where $k$ is the number of inserted segments and $l$ is the length of each segment.
For a segment starting at timestep $t$, the identity drift is measured by comparing the recovered states before and after the zero-action interval:
\begin{equation}
\Delta_{\mathrm{id}}^{(k,l)}
=
\frac{1}{|\mathcal{T}_{\mathrm{id}}^{(k,l)}|}
\sum_{t \in \mathcal{T}_{\mathrm{id}}^{(k,l)}}
d\bigl(\mathbf{s}_{t+l}^{\mathrm{id}}, \mathbf{s}_t\bigr).
\end{equation}
A larger value indicates stronger drift under the neutral action.

\paragraph{Inverse evaluation.}
Inverse consistency is evaluated by executing a local action segment followed by its inverse sequence.
Let $\mathcal{T}_{\mathrm{inv}}^{(k,l)}$ denote the set of evaluated forward--inverse segments, where $k$ is the number of evaluated segments and $l$ controls the temporal scale of each segment.
The inverse recovery error is
\begin{equation}
\Delta_{\mathrm{inv}}^{(k,l)}
=
\frac{1}{|\mathcal{T}_{\mathrm{inv}}^{(k,l)}|}
\sum_{t \in \mathcal{T}_{\mathrm{inv}}^{(k,l)}}
d\bigl(\mathbf{s}_{t,\mathrm{rec}}^{\mathrm{inv}}, \mathbf{s}_t\bigr),
\end{equation}
where $\mathbf{s}_{t,\mathrm{rec}}^{\mathrm{inv}}$ is the recovered state after executing the action segment and its inverse.
A larger value indicates weaker inverse consistency.

\paragraph{Composition evaluation.}
Local composition consistency is evaluated through alternative decompositions of the same local motion.
Given a local action segment $\mathbf{a}_{1:l}$, an alternative segment $\mathbf{b}_{1:l}$ is constructed by redistributing the same cumulative displacement:
\begin{equation}
\mathbf{b}_i
=
w_i \sum_{j=1}^{l} \mathbf{a}_j,
\qquad
\mathbf{w}\sim\mathrm{Dir}(\alpha),
\end{equation}
which satisfies
\begin{equation}
\sum_{i=1}^{l}\mathbf{b}_i
=
\sum_{i=1}^{l}\mathbf{a}_i.
\end{equation}
This construction changes the temporal allocation of the local motion while preserving its cumulative displacement.
The model is rolled out under the original segment and the recomposed segment.
Let $\mathcal{T}_{\mathrm{comp}}^{(l)}$ denote the set of evaluated composition segments.
The composition mismatch is
\begin{equation}
\Delta_{\mathrm{comp}}^{(l)}
=
\frac{1}{|\mathcal{T}_{\mathrm{comp}}^{(l)}|}
\sum_{t \in \mathcal{T}_{\mathrm{comp}}^{(l)}}
d\bigl(\mathbf{s}_{t+l}^{A}, \mathbf{s}_{t+l}^{B}\bigr),
\end{equation}
where $\mathbf{s}_{t+l}^{A}$ and $\mathbf{s}_{t+l}^{B}$ are the recovered endpoint states under the original and recomposed segments.
A larger value indicates weaker local composition consistency.

\subsection{GAC Probe Design and Metric Aggregation}
\label{app:gac_probe_design}

The GAC metric is implemented as a fixed controlled probe suite.
All probe configurations are specified before evaluation and shared across methods, so the reported differences are not affected by method-specific probe selection.
The probe suite evaluates local observable consequences of the group-action ideal at multiple difficulty levels.

For identity and inverse consistency, each configuration is indexed by $(k,l)$.
The parameter $k$ controls the number of inserted or evaluated local segments.
It tests whether small violations accumulate when the same local group-action relation is applied repeatedly.
The parameter $l$ controls the temporal length of each segment.
It tests whether the violation grows as the local rollout span becomes longer.
In the main analysis, one factor is varied while the other is fixed, using $k,l\in\{1,3,5\}$.
This produces small, medium, and large local probes while keeping the evaluation interpretable.
We do not use a dense grid over all $(k,l)$ pairs because it would mix the two sources of difficulty and introduce redundant configurations.

For local composition consistency, the probe compares two decompositions of the same local motion window.
The natural scale parameter is therefore the window length $l$.
We use $l\in\{2,4,6\}$ to cover increasing local composition difficulty while staying within the short-horizon regime where the local composition approximation is intended to apply.
The goal is not to claim that long-horizon trajectories with the same cumulative displacement are equivalent.
Rather, the probe tests whether the model is overly sensitive to different temporal realizations of a short local motion.

For each group-action condition, configuration-level errors are first averaged over fixed valid starting positions and evaluation sequences.
The component errors are then averaged over the corresponding evaluation family:
\begin{equation}
\bar{\Delta}_{\mathrm{id}}
=
\frac{1}{|\mathcal{K}_{\mathrm{id}}|}
\sum_{(k,l)\in\mathcal{K}_{\mathrm{id}}}
\Delta_{\mathrm{id}}^{(k,l)},
\qquad
\bar{\Delta}_{\mathrm{inv}}
=
\frac{1}{|\mathcal{K}_{\mathrm{inv}}|}
\sum_{(k,l)\in\mathcal{K}_{\mathrm{inv}}}
\Delta_{\mathrm{inv}}^{(k,l)}.
\end{equation}
For local composition consistency, we average over the evaluated window lengths:
\begin{equation}
\bar{\Delta}_{\mathrm{comp}}
=
\frac{1}{|\mathcal{L}_{\mathrm{comp}}|}
\sum_{l\in\mathcal{L}_{\mathrm{comp}}}
\Delta_{\mathrm{comp}}^{(l)}.
\end{equation}
The aggregate GAC error reported in the experiments is
\begin{equation}
    \mathcal{E}_{\mathrm{GAC}}
    =
    \frac{1}{3}
    \left(
    \bar{\Delta}_{\mathrm{id}}
    +
    \bar{\Delta}_{\mathrm{inv}}
    +
    \bar{\Delta}_{\mathrm{comp}}
    \right).
\end{equation}
This aggregation is well-defined because all components are measured using the same recovered-state distance $d(\cdot,\cdot)$.
Equal weighting avoids introducing a task-specific preference among identity preservation, inverse cancellation, and local composition consistency.
We report both the aggregate score and the component errors, so changes in $\mathcal{E}_{\mathrm{GAC}}$ can be traced to the corresponding local group-action conditions.

\subsection{GAR Evaluation Protocol}
\label{app:gar_eval_protocol}

Group-Action Robustness (GAR) is evaluated under repeated stochastic rollouts conditioned on the same initial observation and identical action sequence.
Unless otherwise specified, each model is sampled $N=5$ times.
The corresponding generated videos are mapped to recovered state trajectories
$\{\mathbf{s}^{(i)}_{1:T}\}_{i=1}^{N}$.
The GAR error is computed using Eq.~\ref{eq:gar_error}.
Mean and standard deviation are reported across evaluation sequences.
Both 16-frame and 64-frame horizons are evaluated to measure short-range and longer-range accumulation of rollout errors.

\subsection{Assets and Licenses}
\label{app:assets_licenses}

This work uses existing research datasets and model implementations, including RECON~\cite{shah2021recon}, SCAND~\cite{karnan2022scand}, HuRoN~\cite{hirose2023sacson}, NWM~\cite{bar2025nwm}, DIAMOND~\cite{alonso2024diamond}, and DROID-SLAM~\cite{teed2021droid}.
We credit the original creators through citation and use these assets for research purposes following their released terms and licenses.
No new dataset is collected or redistributed in this work.

\section{Additional Quantitative Results}
\label{app:additional_results}

This appendix provides supplementary quantitative results beyond the main RECON analysis in Sec.~\ref{sec:exp}.

\subsection{Additional GAR Results}
\label{app:gar_more}
Table~\ref{tab:gar_more} reports additional GAR results on SCAND and HuRoN under repeated stochastic rollouts with identical initial observations and action sequences.
The same evaluation protocol as in Sec.~\ref{sec:exp_gar} is utilized on both datasets.
On both SCAND and HuRoN, GA training reduces rollout dispersion for both DIAMOND and NWM, showing that the improvement in rollout-level robustness is not limited to the main RECON setting.
The gains are consistently observed under both aligned and non-aligned evaluation, indicating improved local trajectory agreement as well as reduced global drift.

\begin{table*}[!ht]
\caption{
Additional Group-Action Robustness (GAR) results on SCAND and HuRoN under repeated stochastic rollouts with identical action sequences.
Lower is better.
``Aligned'' removes global pose drift before computing trajectory error, while ``Non-aligned'' measures raw trajectory deviation.
}
\label{tab:gar_more}
\centering
\setlength{\tabcolsep}{6.5pt}
\renewcommand{\arraystretch}{1.08}
\begin{tabular}{llcccc}
\toprule
\multirow{2}{*}{Dataset} & \multirow{2}{*}{Method} & \multicolumn{2}{c}{16 Frames} & \multicolumn{2}{c}{64 Frames} \\
\cmidrule(lr){3-4}\cmidrule(lr){5-6}
& & Aligned $\downarrow$ & Non-aligned $\downarrow$ & Aligned $\downarrow$ & Non-aligned $\downarrow$ \\
\midrule
\multirow{4}{*}{SCAND}
& DIAMOND~\cite{alonso2024diamond}
& $0.63 \pm 0.42$
& $1.24 \pm 0.86$
& $2.06 \pm 1.31$
& $6.42 \pm 4.16$ \\
& DIAMOND + GA
& $0.53 \pm 0.35$
& $1.03 \pm 0.72$
& $1.68 \pm 1.05$
& $5.28 \pm 3.36$ \\
& NWM~\cite{bar2025nwm}
& $0.46 \pm 0.28$ 
& $0.91 \pm 0.64$ 
& $1.45 \pm 0.85$ 
& $4.08 \pm 2.69$ \\
& NWM + GA
& $\mathbf{0.37 \pm 0.23}$
& $\mathbf{0.72 \pm 0.50}$
& $\mathbf{1.12 \pm 0.68}$
& $\mathbf{3.39 \pm 2.04}$ \\
\midrule
\multirow{4}{*}{HuRoN}
& DIAMOND~\cite{alonso2024diamond}
& $0.65 \pm 0.23$
& $1.37 \pm 0.56$
& $2.34 \pm 0.92$
& $7.36 \pm 3.20$ \\
& DIAMOND + GA
& $0.55 \pm 0.19$
& $1.13 \pm 0.46$
& $1.91 \pm 0.75$
& $6.04 \pm 2.58$ \\
& NWM~\cite{bar2025nwm}
& $0.47 \pm 0.16$
& $0.98 \pm 0.35$
& $1.70 \pm 0.58$
& $5.13 \pm 2.22$ \\
& NWM + GA
& $\mathbf{0.39 \pm 0.13}$
& $\mathbf{0.77 \pm 0.29}$
& $\mathbf{1.33 \pm 0.47}$
& $\mathbf{4.15 \pm 1.70}$ \\
\bottomrule
\end{tabular}
\end{table*}

\subsection{Additional GAC Results}
\label{app:gac_more}

Table~\ref{tab:gac_aggregate_more} reports aggregated GAC errors on SCAND and HuRoN.
Component errors are averaged over the evaluated probe configurations, and $\mathcal{E}_{\mathrm{GAC}}$ is computed as the mean of identity, inverse and composition errors.
On both datasets, GA training reduces the aggregate GAC error for both DIAMOND and NWM.
This supports the conclusion that the latent group-action objective improves local action faithfulness beyond the main RECON dataset and beyond a single backbone architecture.

\begin{table*}[!ht]
\caption{
Aggregated Group-Action Consistency (GAC) error on SCAND and HuRoN.
Component errors are averaged over evaluated probe configurations.
The last column reports $\mathcal{E}_{\mathrm{GAC}}$ as the mean of the three component errors.
Standard deviations are shown for component metrics.
Lower is better.
}
\label{tab:gac_aggregate_more}
\centering
\setlength{\tabcolsep}{6.5pt}
\renewcommand{\arraystretch}{1.08}
\begin{tabular}{llcccc}
\toprule
Dataset
& Method
& $\bar{\Delta}_{\mathrm{id}} \downarrow$
& $\bar{\Delta}_{\mathrm{inv}} \downarrow$
& $\bar{\Delta}_{\mathrm{comp}} \downarrow$
& $\mathcal{E}_{\mathrm{GAC}} \downarrow$ \\
\midrule

\multirow{4}{*}{SCAND}
& DIAMOND~\cite{alonso2024diamond}
& $6.12 \pm 2.95$
& $5.92 \pm 2.81$
& $3.56 \pm 2.12$
& $5.20$ \\
& DIAMOND + GA
& $5.56 \pm 2.71$
& $5.43 \pm 2.53$
& $2.96 \pm 1.82$
& $4.65$ \\
& NWM~\cite{bar2025nwm}
& $5.38 \pm 2.63$
& $5.51 \pm 2.55$
& $2.68 \pm 1.61$
& $4.52$ \\
& NWM + GA
& $\mathbf{4.84 \pm 2.32}$
& $\mathbf{4.67 \pm 2.16}$
& $\mathbf{2.06 \pm 1.31}$
& $\mathbf{3.86}$ \\

\midrule

\multirow{4}{*}{HuRoN}
& DIAMOND~\cite{alonso2024diamond}
& $7.08 \pm 2.86$
& $6.77 \pm 2.91$
& $3.31 \pm 1.84$
& $5.72$ \\
& DIAMOND + GA
& $6.36 \pm 2.66$
& $6.08 \pm 2.55$
& $2.79 \pm 1.50$
& $5.08$ \\
& NWM~\cite{bar2025nwm}
& $6.07 \pm 2.48$
& $6.39 \pm 2.64$
& $2.53 \pm 1.37$
& $5.00$ \\
& NWM + GA
& $\mathbf{5.36 \pm 2.25}$
& $\mathbf{5.49 \pm 2.32}$
& $\mathbf{1.99 \pm 1.15}$
& $\mathbf{4.28}$ \\

\bottomrule
\end{tabular}
\end{table*}

\subsection{Probe-wise Analysis of GAC Errors}
\label{app:gac_probe_analysis}

Fig.~\ref{fig:asc_trends} provides a probe-wise view of GAC errors on RECON.
The probe parameters are not intended to define a strictly monotonic difficulty scale.
Instead, they expose complementary failure modes of group-action consistency, including repeated local relations, longer temporal extent, and alternative decompositions of the same local motion.

For inverse consistency, errors tend to increase as the evaluated segment becomes longer or is repeated more times.
This is expected because the model must compose a forward motion with its inverse and recover the starting state.
Residual transition errors accumulated during the forward segment can therefore remain after cancellation.
GA regularization consistently lowers the inverse error, indicating improved action cancellation under composed rollouts.

For local composition consistency, longer windows lead to larger endpoint mismatch.
Longer windows involve more transition steps and give two equivalent local decompositions more opportunities to diverge.
The reduction after GA regularization shows that the model becomes less sensitive to the particular temporal decomposition of a local motion, which is the desired behavior under the group-action formulation.

Identity consistency follows a less monotonic pattern.
Repeated or longer zero-action segments can make the stationary condition more explicit, which may reduce mean drift in intermediate settings.
At larger settings, however, the probe also imposes a stricter preservation requirement, since the model must remain static across more pause locations or longer pause durations.
The increased error or variance in these cases suggests residual motion under extended zero-action rollouts, rather than exact realization of the identity transformation.
GA regularization reduces identity errors in most settings, showing improved but still imperfect preservation under zero action.

\subsection{Image-Quality Evaluation}
\label{app:image_quality}

Table~\ref{tab:image_quality_metrics} reports image-quality metrics for generated rollouts on RECON.
We evaluate LPIPS~\cite{LPIPS}, DreamSim~\cite{fu2023dreamsim}, and PSNR between generated frames and the corresponding ground-truth frames following NWM~\cite{bar2025nwm}, using the same preprocessing and sampling pipeline across methods.
This evaluation checks whether the gains in GAC and GAR come at the cost of visual prediction quality.
All methods are evaluated under the same preprocessing, sampling, and metric-computation pipeline.
The results show that GA regularization keeps image-quality metrics comparable to the base model.
This suggests that the proposed objective improves action-conditioned dynamics without noticeably degrading perceptual or pixel-level generation quality.

\begin{table}[!ht]
\caption{
Image-quality evaluation on RECON under the same evaluation pipeline.
LPIPS and DreamSim measure perceptual dissimilarity, while PSNR measures pixel-level reconstruction quality.
Lower is better for LPIPS and DreamSim, while higher is better for PSNR.
}
\label{tab:image_quality_metrics}
\centering
\setlength{\tabcolsep}{6pt}
\renewcommand{\arraystretch}{1.08}
\begin{tabular}{lccc}
\toprule
Method & LPIPS $\downarrow$ & DreamSim $\downarrow$ & PSNR $\uparrow$ \\
\midrule
NWM & $0.386 \pm 0.004$ & $0.141 \pm 0.002$ & $13.86 \pm 0.08$ \\
NWM + GA & $\mathbf{0.364 \pm 0.003}$ & $\mathbf{0.130 \pm 0.002}$ & $\mathbf{14.29 \pm 0.07}$ \\
\bottomrule
\end{tabular}
\end{table}

\section{Additional Ablation Studies}
\label{app:ablation_more}

This appendix provides supplementary ablation studies on RECON.
We analyze the effect of the maximum rollout span, the group-action loss weight, and the training rollout paradigm.

\subsection{Effect of Maximum Rollout Span}
\label{app:ablation_rollout_span}

Table~\ref{tab:ablation_rollout_span} studies the effect of the maximum rollout span $L$ used in group-action training.
Increasing the rollout span generally improves GAR, indicating that enforcing group-action constraints over multi-step transitions is important for reducing accumulated rollout error.
The performance becomes similar once $L$ reaches a moderate range, with only marginal differences between $L=4$ and $L=6$.
We therefore use $L=4$ as the default setting, which offers a good balance between performance and training cost.

\begin{table*}[!ht]
\caption{
Effect of the maximum rollout span $L$ used in group-action training on RECON.
Lower is better.
``Aligned'' removes global pose drift before computing trajectory error, while ``Non-aligned'' measures raw trajectory deviation.
}
\label{tab:ablation_rollout_span}
\centering
\setlength{\tabcolsep}{7pt}
\renewcommand{\arraystretch}{1.08}
\begin{tabular}{lcccc}
\toprule
\multirow{2}{*}{$L$} & \multicolumn{2}{c}{16 Frames} & \multicolumn{2}{c}{64 Frames} \\
\cmidrule(lr){2-3}\cmidrule(lr){4-5}
& Aligned $\downarrow$ & Non-aligned $\downarrow$ & Aligned $\downarrow$ & Non-aligned $\downarrow$ \\
\midrule
2
& $0.27 \pm 0.22$
& $0.53 \pm 0.37$
& $0.49 \pm 0.36$
& $1.35 \pm 0.89$ \\

4
& $\mathbf{0.26 \pm 0.20}$
& $\mathbf{0.49 \pm 0.35}$
& $0.46 \pm 0.33$
& $\mathbf{1.28 \pm 0.82}$ \\

6
& $0.26 \pm 0.21$
& $0.50 \pm 0.35$
& $\mathbf{0.45 \pm 0.33}$
& $1.29 \pm 0.83$ \\
\bottomrule
\end{tabular}
\end{table*}

\subsection{Effect of Group-Action Loss Weight}
\label{app:ablation_lambda}

Table~\ref{tab:ablation_lambda} studies the effect of the group-action loss weight $\lambda$.
Beyond hyperparameter sensitivity, this experiment tests whether the observed gain could be attributed simply to continued fine-tuning.
The $\lambda=0$ case corresponds to training under the same pipeline without group-action constraints.
Its performance is almost unchanged relative to the baseline, showing only marginal variation.
Introducing a nonzero group-action weight leads to clear improvements, with the best overall performance obtained at an intermediate setting.
When $\lambda$ becomes larger, the gain saturates slightly, suggesting that excessively strong group-action training may start to compete with the base diffusion objective.

\begin{table*}[!ht]
\caption{
Effect of the group-action loss weight $\lambda$ on RECON.
Lower is better.
``Aligned'' removes global pose drift before computing trajectory error, while ``Non-aligned'' measures raw trajectory deviation.
}
\label{tab:ablation_lambda}
\centering
\setlength{\tabcolsep}{7pt}
\renewcommand{\arraystretch}{1.08}
\begin{tabular}{lcccc}
\toprule
\multirow{2}{*}{$\lambda$} & \multicolumn{2}{c}{16 Frames} & \multicolumn{2}{c}{64 Frames} \\
\cmidrule(lr){2-3}\cmidrule(lr){4-5}
& Aligned $\downarrow$ & Non-aligned $\downarrow$ & Aligned $\downarrow$ & Non-aligned $\downarrow$ \\
\midrule
Baseline
& $0.32 \pm 0.26$
& $0.65 \pm 0.41$
& $0.62 \pm 0.44$
& $1.52 \pm 1.27$ \\

0
& $0.33 \pm 0.23$
& $0.61 \pm 0.42$
& $0.60 \pm 0.44$
& $1.47 \pm 1.21$ \\

0.1
& $0.27 \pm 0.21$
& $0.52 \pm 0.37$
& $0.48 \pm 0.35$
& $1.34 \pm 0.88$ \\

0.5
& $\mathbf{0.26 \pm 0.20}$
& $\mathbf{0.49 \pm 0.35}$
& $\mathbf{0.46 \pm 0.33}$
& $\mathbf{1.28 \pm 0.82}$ \\

1.0
& $0.27 \pm 0.20$
& $0.50 \pm 0.36$
& $0.47 \pm 0.34$
& $1.31 \pm 0.86$ \\
\bottomrule
\end{tabular}
\end{table*}

\subsection{Teacher-Forcing versus Free-Running Training}
\label{app:ablation_tf_vs_fr}

Table~\ref{tab:ablation_tf_vs_fr} compares two ways of applying group-action constraints during training.
In the teacher-forcing variant, the same constraints are constructed, but each transition is conditioned on ground-truth history rather than the model's own previously generated states.
In the free-running variant, the model rolls forward under its own generated latent states, and the group-action losses are imposed on the resulting rollout.
The teacher-forcing variant improves over the baseline, indicating that the group-action relations are meaningful even when applied under locally corrected context.
However, its gain remains consistently smaller than that of free-running training, especially at longer horizons.
This supports the free-running latent rollout design discussed in Appendix~\ref{app:free_running_latent_rollout}.

\begin{table*}[!ht]
\caption{
Teacher-forcing versus free-running group-action training on RECON.
Lower is better.
``Aligned'' removes global pose drift before computing trajectory error, while ``Non-aligned'' measures raw trajectory deviation.
}
\label{tab:ablation_tf_vs_fr}
\centering
\setlength{\tabcolsep}{7pt}
\renewcommand{\arraystretch}{1.08}
\begin{tabular}{lcccc}
\toprule
\multirow{2}{*}{Training scheme} & \multicolumn{2}{c}{16 Frames} & \multicolumn{2}{c}{64 Frames} \\
\cmidrule(lr){2-3}\cmidrule(lr){4-5}
& Aligned $\downarrow$ & Non-aligned $\downarrow$ & Aligned $\downarrow$ & Non-aligned $\downarrow$ \\
\midrule
Baseline
& $0.32 \pm 0.26$
& $0.65 \pm 0.41$
& $0.62 \pm 0.44$
& $1.52 \pm 1.27$ \\

GA w/ Teacher-Forcing
& $0.30 \pm 0.24$
& $0.60 \pm 0.39$
& $0.56 \pm 0.40$
& $1.43 \pm 1.05$ \\

GA w/ Free-Running
& $\mathbf{0.26 \pm 0.20}$
& $\mathbf{0.49 \pm 0.35}$
& $\mathbf{0.46 \pm 0.33}$
& $\mathbf{1.28 \pm 0.82}$ \\
\bottomrule
\end{tabular}
\end{table*}

\section{Additional Qualitative Results}
\label{app:qualitative_appendix}

This appendix provides additional qualitative evidence for the two levels of evaluation used in the main paper.
We first show failure cases of the baseline model, including weak rollout-level robustness under identical actions and local violations of the group-action conditions.
We then provide additional examples after GA training, showing that the improvements measured by GAC and GAR are also visible in generated rollouts.

\begin{figure*}[!ht]
\centering
\includegraphics[width=\textwidth]{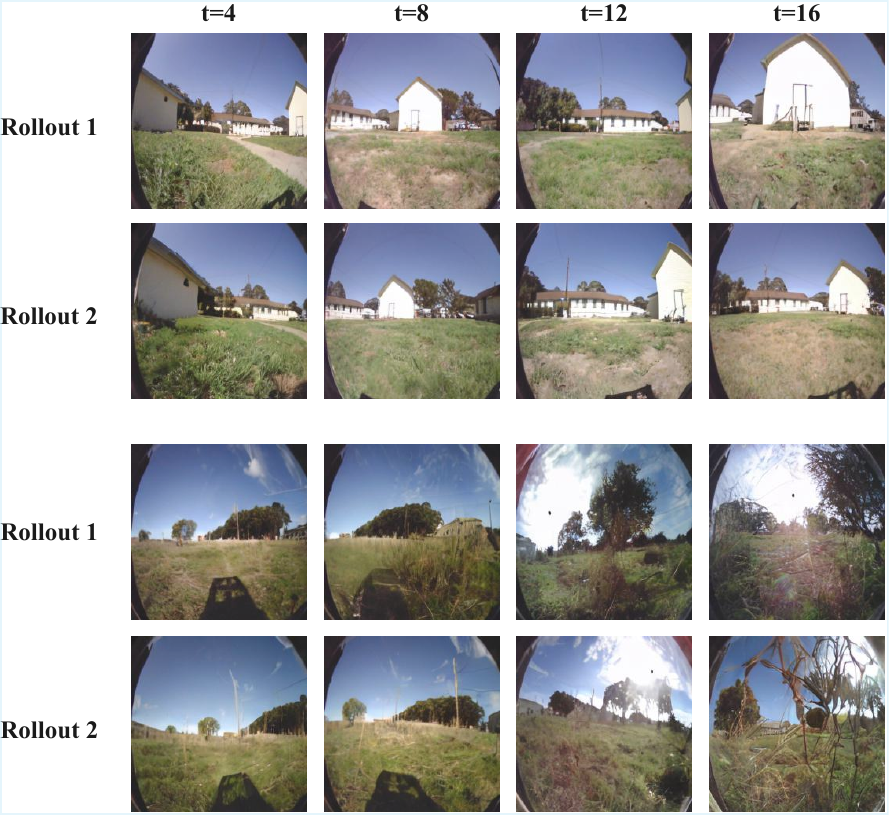}
\caption{
Additional qualitative examples of weak rollout-level robustness in the baseline NWM.
Each example shows two stochastic rollouts generated from the same initial observation and identical action sequence.
Although the frames can remain visually plausible, the induced motion may diverge over time.
}
\label{fig:traj_more}
\end{figure*}

\subsection{Baseline Rollout Robustness Failures}
\label{app:rollout_more}

Figure~\ref{fig:traj_more} shows additional qualitative examples of weak rollout-level robustness in the baseline NWM.
Each example compares two stochastic rollouts generated from the same initial observation and identical action sequence.
Although individual frames remain visually plausible, the induced motion can diverge noticeably over time, leading to inconsistent trajectory behavior across repeated rollouts.

These examples complement the GAR results in Sec.~\ref{sec:exp_gar}.
They show that the dispersion measured by $\mathcal{E}_{\mathrm{GAR}}$ is directly visible in generated sequences rather than being only a property of recovered trajectory metrics.
The same behavior appears across multiple cases, consistent with the quantitative GAR results.

\begin{figure*}[!ht]
\centering
\includegraphics[width=\textwidth]{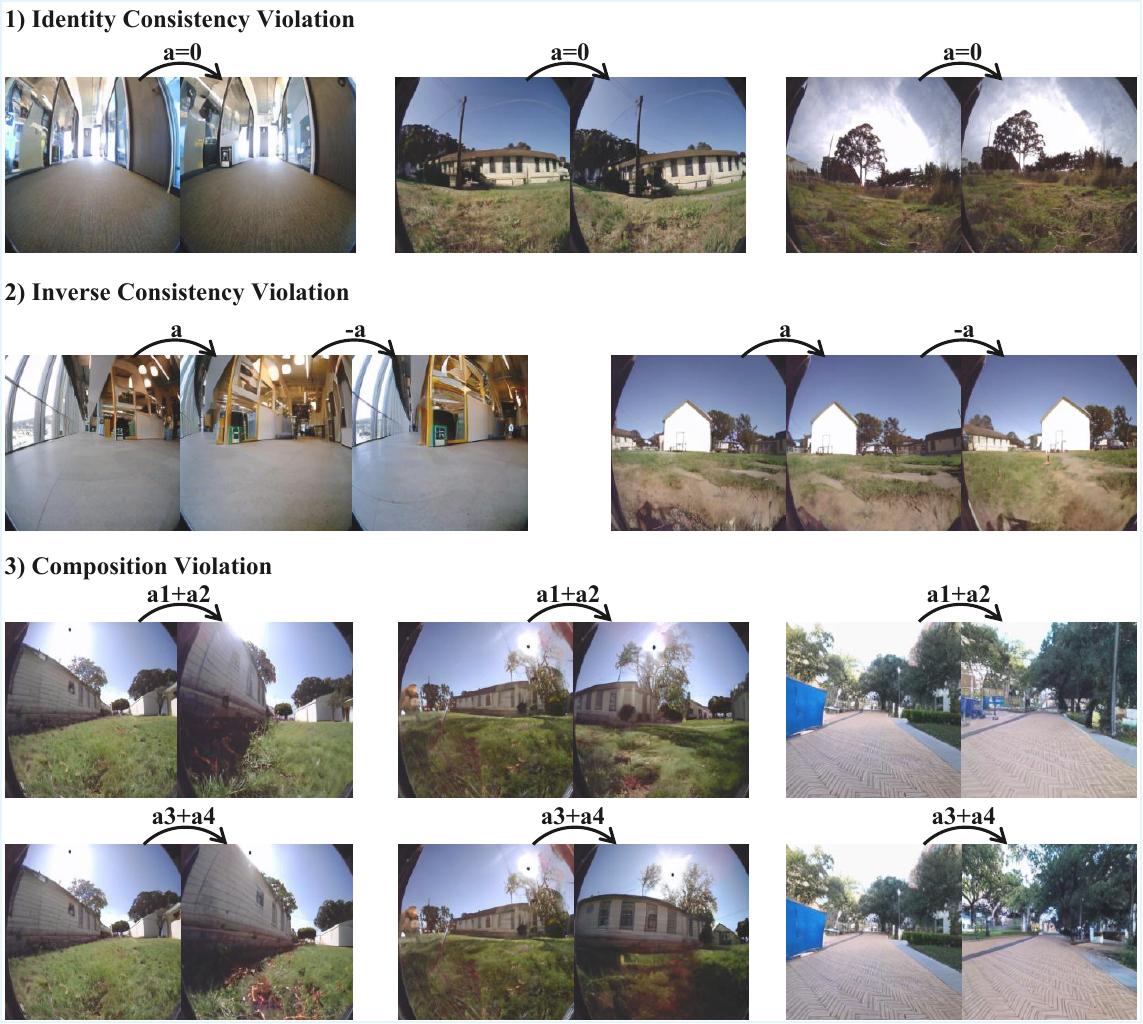}
\caption{
Additional qualitative examples of local GAC failures in the baseline NWM.
The figure includes representative cases for identity consistency, inverse consistency, and local composition consistency.
Observed behaviors include drift under zero-action probes, incomplete recovery after forward--inverse action sequences, and mismatched outcomes under locally equivalent action decompositions.
}
\label{fig:qual_more}
\end{figure*}

\subsection{Baseline Local GAC Failures}
\label{app:qualitative_cases}

Figure~\ref{fig:qual_more} shows additional qualitative examples of local GAC failures in the baseline NWM.
The examples cover identity consistency, inverse consistency, and local composition consistency.
They make the three local failure modes visible in generated rollouts, complementing the quantitative GAC errors in Sec.~\ref{sec:exp_gac}.

For identity consistency, inserted zero-action segments do not reliably preserve the current state.
Even when the action input indicates no motion, the generated rollout may still exhibit visible drift or viewpoint change.
For inverse consistency, a local action segment followed by its inverse often fails to recover the starting state, leaving a residual offset after the forward--inverse sequence.
For local composition consistency, locally equivalent action decompositions can lead to noticeably different rollout outcomes, indicating sensitivity to temporal realization beyond cumulative displacement alone.

Together, these examples show that the local errors measured by $\Delta_{\mathrm{id}}$, $\Delta_{\mathrm{inv}}$, and $\Delta_{\mathrm{comp}}$ correspond to concrete failures in generated action-conditioned behavior.
They also support the interpretation that poor rollout robustness is connected to violations of local group-action conditions.

\begin{figure*}[!ht]
\centering
\includegraphics[width=\textwidth]{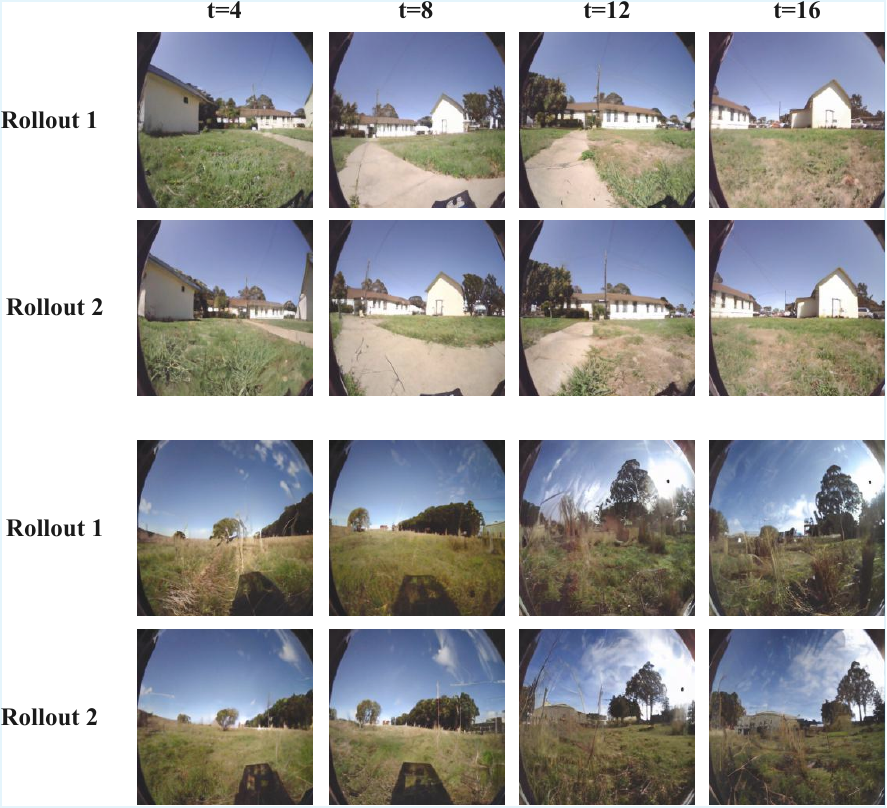}
\caption{
Additional qualitative examples of repeated-rollout robustness after GA training.
Each example shows stochastic rollouts generated from the same initial observation and identical action sequence.
Although visual details may vary across samples, the induced motion remains more consistent.
}
\label{fig:traj_more_consistency}
\end{figure*}

\begin{figure*}[!ht]
\centering
\includegraphics[width=\textwidth]{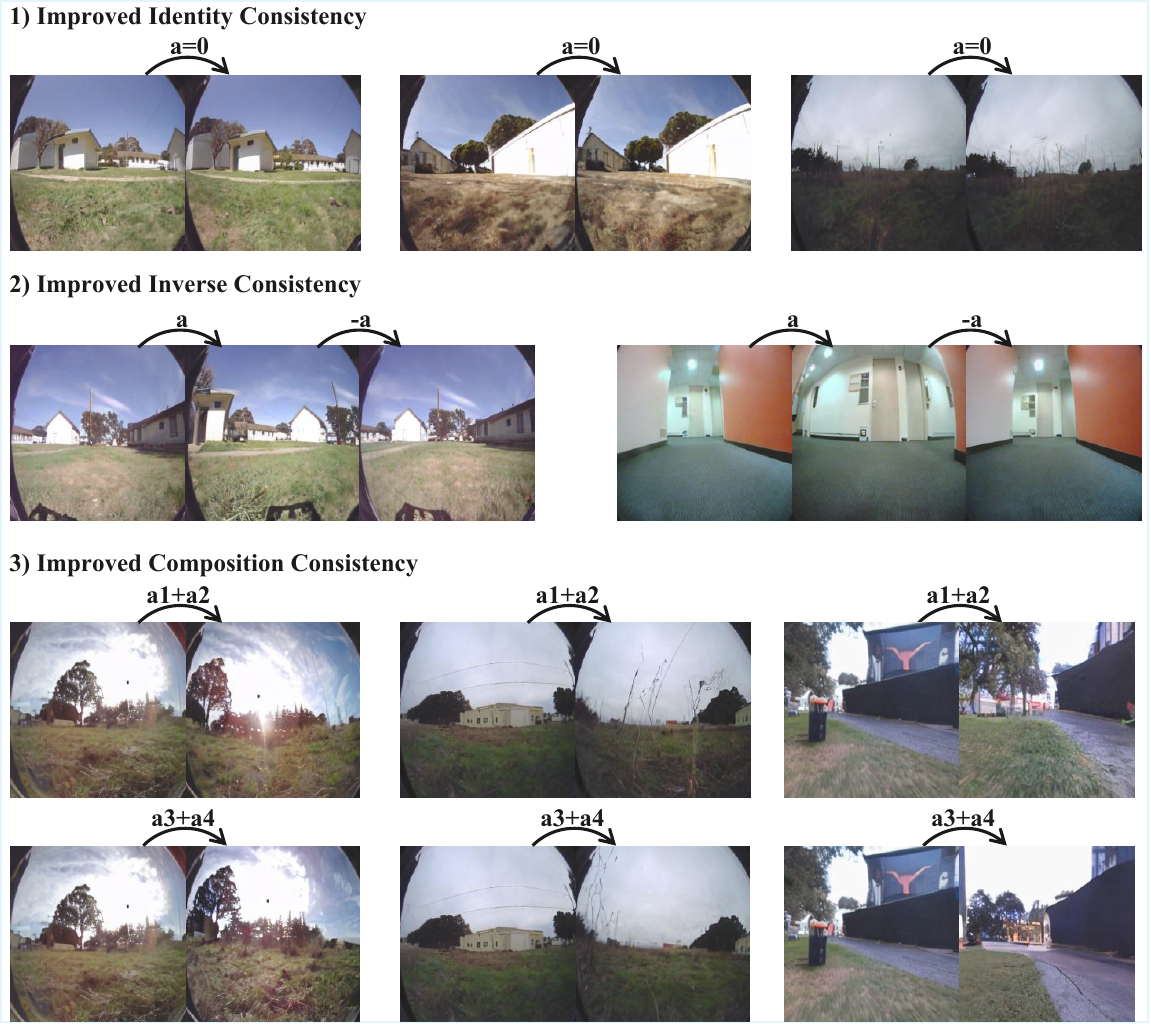}
\caption{
Additional qualitative examples of local group-action behavior after GA training.
The examples show reduced drift under zero action, approximate recovery under forward--inverse action sequences, and more compatible outcomes under locally equivalent action decompositions.
}
\label{fig:qual_probe_more}
\end{figure*}

\subsection{Additional Qualitative Results after GA Training}
\label{app:qual_after_training}

Figures~\ref{fig:traj_more_consistency} and~\ref{fig:qual_probe_more} provide additional qualitative examples after GA training.
Figure~\ref{fig:traj_more_consistency} shows repeated stochastic rollouts under identical initial observations and action sequences.
Although individual samples may still differ in visual details, their motion evolution remains more consistent across rollouts.
This complements the GAR improvements reported in Sec.~\ref{sec:exp_gar} and Appendix~\ref{app:gar_more}.

Figure~\ref{fig:qual_probe_more} shows representative local group-action behavior under identity, inverse, and composition transformations.
The GA-trained model exhibits limited drift under zero action, approximate recovery after forward--inverse action sequences, and compatible outcomes under locally equivalent action decompositions.
These examples complement the GAC improvements in Sec.~\ref{sec:exp_gac} and Appendix~\ref{app:gac_more}.

Taken together, the qualitative results show both sides of the empirical story.
The baseline model can generate visually plausible videos while violating local group-action conditions and diverging under repeated rollout.
After GA training, the generated rollouts remain more consistent with the action-composition structure, and the improvement is visible not only in aggregate metrics but also in the generated sequences themselves.

\section{Related Work}
\label{sec:appendix_relatedwork}

\subsection{Visual Navigation and Embodied Action Prediction}

Learning-based visual navigation studies how agents select actions from visual observations, goals, or task specifications.
This problem has been widely explored in goal-conditioned navigation~\cite{sridhar2024nomad,vaidheeswaran2025goal,shen2026efficient}, vision-and-language navigation~\cite{gu2022vision,zhou2024navgpt,xue2026profocus}, and foundation-model-based navigation~\cite{shen2025effonav,moriconi2025foundation,khan2025foundation}.
Early learning-based approaches typically combined visual perception with policy learning for goal reaching or waypoint prediction.
More recent methods increasingly adopt large-scale data, end-to-end training, and expressive action distributions to improve generalization across scenes and embodiments.
Representative examples include GNM~\cite{shah2023gnm} and ViNT~\cite{shah2023vint}, which learn generalizable visual navigation policies from diverse data, and NoMaD~\cite{sridhar2024nomad}, which uses diffusion modeling to represent multi-modal navigation actions.

Recent work further pushes navigation toward longer horizons, stronger semantic grounding, and broader task generalization.
LH-VLN~\cite{song2025longhorizonvln} studies long-horizon vision-language navigation with new benchmarks and evaluation protocols, while NavFoM~\cite{zhang2025navfom} scales navigation modeling across tasks and embodiments.
Large-model-based navigation methods have also become increasingly common.
VLN-R1~\cite{qi2025vlnr1} explores reinforcement fine-tuning for vision-language navigation, and ImagineNav++~\cite{wang2025imaginenavpp} uses imagined future views and large vision-language models to improve planning and reasoning.
Other recent methods investigate foundation-model-based action selection, goal-distance estimation, and proactive prediction for navigation~\cite{milikic2025vld,zhai2026vnm,liu2025navforesee}.

These methods are primarily policy-centric.
Their main objective is to map observations, instructions, or goals to actions that complete a navigation task.
The present work studies a different but complementary question.
Rather than asking whether a model predicts a good action, it asks whether an action-conditioned world model produces rollouts whose induced dynamics remain faithful to the composition structure of the actions themselves.
This distinction matters because a policy may be useful for goal reaching, and a generated rollout may appear visually plausible, while the underlying motion evolution still fails to preserve identity, inverse, and composition relations under repeated rollout.
Our focus is therefore not visual navigation policy learning, but the structural action faithfulness of action-conditioned dynamics.

\subsection{Latent and Diffusion-Based World Models}

The study of action-conditioned dynamics is closely related to model-based reinforcement learning, where learned transition models are used for planning, policy learning, and long-horizon decision making.
Latent world models such as Dreamer and its successors~\cite{hafner2019dream,hafner2023mastering} demonstrate that imagined rollouts in a compact latent space can support efficient control across diverse domains.
Related work studies reconstruction-free latent imagination~\cite{okada2020dreaming}, memory-augmented latent dynamics~\cite{mu2021mbrlimagination}, and recent applications of latent world models to embodied planning and autonomous driving~\cite{jin2026dreamerad,li2026latent4dplanning}.
These methods establish that learned latent dynamics can be useful for downstream control and planning, and they motivate the use of latent rollout as an efficient computational substrate.

However, the objectives in model-based reinforcement learning are usually organized around control performance, sample efficiency, or planning utility.
They do not explicitly ask whether the learned transition respects the algebraic relations of the action space.
In contrast, the present work focuses on a structural property of the rollout dynamics themselves.
The question is not only whether latent imagination is useful, but whether the action-conditioned transition remains compatible with the composition, identity, and inverse relations induced by physical motion.

Modern visual world models are also shaped by the rapid development of diffusion-based image and video generation.
Denoising diffusion models~\cite{ho2020ddpm} and latent diffusion models~\cite{rombach2022ldm} provide scalable foundations for high-fidelity visual synthesis.
Video diffusion models extend this paradigm to temporal prediction and controllable motion, including early video diffusion work~\cite{ho2022videodiffusion,ho2022imagenvideo}, latent video generation~\cite{blattmann2023videoldm}, motion adaptation~\cite{guo2024animatediff}, Stable Video Diffusion~\cite{blattmann2023svd}, and large-scale video generators such as CogVideoX~\cite{yang2024cogvideox} and Open-Sora~\cite{zheng2024opensora}.
As these models improve in fidelity, duration, and temporal coherence, they become increasingly viable backbones for action-conditioned visual rollout.

Several recent systems adapt pretrained or large-scale video diffusion models to interactive or embodied settings.
Vid2World~\cite{huang2025vid2world} transfers pretrained video diffusion models into interactive world models, while DreamZero~\cite{kim2026dreamzero} builds a world-action model on top of a pretrained video diffusion backbone.
These systems reflect a broader trend: video generators are increasingly expected to function not only as visual synthesizers, but also as simulators conditioned on actions.
The present work builds on this transition but targets a different property from perceptual quality.
Visual fidelity, temporal smoothness, and controllability do not by themselves guarantee that generated motion is governed by the input actions.
Our focus is therefore not generative realism alone, but whether learned rollout dynamics preserve the local observable consequences of action composition.

\subsection{Embodied World Models for Navigation and Simulation}

World models approach embodied decision making by predicting future observations conditioned on candidate actions.
This paradigm has a long history in model-based reinforcement learning, where latent dynamics models such as Dreamer~\cite{hafner2019dream,hafner2023mastering} support planning and policy learning through imagined rollouts.
More recently, similar ideas have been extended to visual navigation and embodied simulation, where future-state generation is used not only for representation learning but also for planning, action selection, and counterfactual evaluation.

Navigation World Models~\cite{bar2025nwm} show that controllable video generation can support action-conditioned rollout and planning in navigation.
This line of work is especially relevant to our setting because navigation actions correspond to physical motion in the plane, making the structure of action composition directly meaningful.
Recent work further broadens embodied world modeling through geometry-aware prediction, language-conditioned future modeling, general-purpose simulation, and action-centered world-action modeling.
Aether~\cite{zhu2025aether} combines geometry-aware reconstruction, action-conditioned prediction, and visual planning in a unified framework.
Language-conditioned world modeling for visual navigation~\cite{dong2026lcvn} studies future prediction, language grounding, and action generation jointly.
UniSim~\cite{yang2023learning} explores large-scale video models as general-purpose embodied simulators.
Other systems such as UniWM~\cite{liu2025uniwm}, NavForesee~\cite{liu2025navforesee}, MoWM~\cite{hu2025mowm}, GigaWorld-Policy~\cite{liu2026gigaworldpolicy}, DreamZero~\cite{kim2026dreamzero}, and MTV-World~\cite{jin2025mtvworld} investigate unified world modeling, planning-oriented future imagination, or action-centered world-action modeling for embodied control.

These studies demonstrate the promise of world models for embodied decision making.
At the same time, visual plausibility and planning utility do not by themselves establish that learned dynamics faithfully follow actions.
A model may generate realistic-looking futures and still allow the same action sequence to induce inconsistent motion across stochastic rollouts.
It may also drift under zero action, fail to cancel a motion with its inverse, or produce incompatible endpoints under locally equivalent decompositions.
The issue examined in the present work is therefore more specific than embodied world modeling in general.
We ask whether action-conditioned rollout remains consistent with action composition, and how failures of this consistency appear as trajectory instability.
This places the proposed framework between generative world modeling and dynamical consistency analysis.

\subsection{Action Consistency and World-Model Evaluation}

As world models become more capable, their evaluation has expanded beyond perceptual metrics, reconstruction quality, and short-horizon prediction error.
Recent work has begun to ask whether generated worlds are useful for downstream decision making and whether they remain reliable under interaction.
WorldSimBench~\cite{qin2024worldsimbench} evaluates predictive video models from perceptual and embodied perspectives, including action-level consistency in downstream tasks.
WoW-World-Eval~\cite{fan2026wowworldeval} studies embodied world models through perception, planning, prediction, generalization, and execution.
WorldArena~\cite{shang2026worldarena} highlights the gap between perceptual quality and embodied functional utility, while WorldLens~\cite{liang2025worldlens} evaluates driving world models along multiple axes including action-following and downstream reliability.
These benchmarks reflect an important shift from asking whether generated videos look realistic to asking whether they function as useful and reliable simulations.

A closely related line of work focuses more directly on controllability, memory, and rollout reliability.
MIND~\cite{ye2026mind} evaluates action control and memory consistency in closed-loop rollouts.
Toward Memory-Aided World Models~\cite{wang2025memoryaidedwm} emphasizes spatial consistency under long-horizon navigation rollouts.
Recent consistency-oriented world modeling work~\cite{wang2026multitokenwm,gao2026wildworld} studies state alignment and action following beyond one-step prediction quality, while Mobile World Models~\cite{huang2026mwm} highlights the gap between visually plausible prediction and action-conditioned consistency and proposes strategies to improve multi-step rollout behavior.
Together, these studies show growing awareness that visual realism is insufficient for world modeling, especially when generated rollouts are used for planning or decision making.

The present work is aligned with this emerging direction but differs in how the problem is formulated.
Existing evaluations often treat action consistency as a benchmark dimension, an action-following score, or a global rollout symptom.
Such evaluations are valuable, but they do not explicitly formalize the local action relations that make long-horizon action-conditioned rollout stable.
In contrast, this work asks which local relations should be preserved for a navigation world model to be action-faithful.
The group-action view decomposes action consistency into identity preservation, inverse cancellation, and local composition consistency.
These properties are not arbitrary probes; they are local observable consequences of the group action induced by planar ego-motion.
The proposed GAC metric evaluates these local conditions, while GAR measures whether repeated stochastic rollouts remain robust under the same action sequence.
This provides a structured explanation of why repeated rollouts can diverge even when generated frames remain visually plausible.

\subsection{Positioning of the Proposed Work}

The proposed work is closest to embodied world modeling, action-conditioned video generation, and recent efforts on evaluating world-model consistency.
Existing studies have begun to move beyond perceptual realism by measuring spatial coherence, memory consistency, action controllability, or trajectory-level stability.
These directions reveal important failures of visually plausible simulators.
Our work shares this motivation, but places a different criterion on action-conditioned dynamics: rather than treating trajectory inconsistency only as a rollout-level outcome, we ask whether the model preserves the local action-composition structure from which stable rollouts should arise.

This criterion distinguishes the proposed framework from broad embodied world-model benchmarks and trajectory-consistency evaluations.
Trajectory dispersion measures whether repeated stochastic rollouts under the same action sequence remain close in recovered state space, but it does not specify which relations of the action space are violated.
We instead start from the group action of planar ego-motion and instantiate its local observable consequences through identity consistency, inverse consistency, and local composition consistency.
GAC probes these local relations under controlled transformations, while GAR measures whether their preservation translates into rollout-level robustness.

The contribution is therefore a group-action formulation of action faithfulness and its operationalization for visual world models.
The same local conditions that support GAC are also converted into latent rollout constraints, providing a practical training signal that improves action-conditioned rollout stability without changing the backbone architecture.
In this sense, the proposed work complements existing world-model evaluation efforts by connecting rollout-level instability to local action-composition structure and by showing that this structure can guide model improvement.

\section{Discussion on Scope, Broader Implications, and Impact}
\label{app:discussion}

The GA framework is intended as a structured criterion for action-faithful world modeling.
This paper instantiates it in embodied navigation, where actions correspond to local ego-motion in the plane and are naturally tied to the composition structure of $SE(2)$.
Identity consistency, inverse consistency, and local composition consistency are therefore not task-specific heuristics, but observable consequences of the group-action ideal under the navigation action space.
GAC evaluates whether these local consequences are preserved in recovered state space, while GAR measures whether repeated stochastic rollouts remain stable under the same initial observation and identical action sequence.
Together, they separate local action-structure preservation from its rollout-level manifestation.

The present instantiation is deliberately focused.
It studies planar navigation actions, local ego-motion increments, and short-window composition probes because they provide a direct setting for testing action composition in video rollout.
Other embodiments may call for richer action structures, such as exact $SE(2)$ or $SE(3)$ composition, non-holonomic constraints, contact dynamics, or embodiment-specific motion priors.
Likewise, recovered trajectories provide an interpretable motion-level state representation, while future evaluations could incorporate geometry, semantic maps, object states, contact events, or task outcomes.
The latent training objective follows the same scope: it provides a lightweight and architecture-agnostic way to encourage action-composition consistency, while leaving room for stronger instantiations through equivariant architectures, consistency-preserving latent transitions, planning-aware objectives, or closed-loop training.

Beyond navigation, the same principle may apply to world-modeling problems in which the target system has known transformation structure.
In robotics, autonomous driving, manipulation, and physical simulation, actions are often tied to rigid-body motion, contact, kinematic constraints, or interaction rules.
These settings suggest direct extensions through richer motion groups, object-centric states, contact-aware transitions, or closed-loop action sequences.
Scientific world models provide a more distant but instructive case, because their dynamics are often organized by explicit physical symmetries and constraints rather than by embodied control commands.
Recent generative-simulation work has modeled molecular dynamics trajectories~\cite{jing2024mdgen}, protein equilibrium ensembles~\cite{lewis2025bioemu}, all-atom protein dynamics in learned latent spaces~\cite{sengar2025beyondensembles}, protein conformational ensembles~\cite{wang2026macdiff}, and folding pathways~\cite{pathdiffusion2026}.

These scientific systems do not share the same action space as navigation, but many are governed by transformations that play an analogous structural role.
Rotational and translational symmetries are central to geometric deep learning for 3D structures~\cite{thomas2018tensorfield,satorras2021egnn}, while permutation invariance of identical particles or atoms is a basic requirement for molecular representations~\cite{satorras2021egnn,hoogeboom2022edm}.
These principles have also been incorporated into geometry-aware generative models for molecular conformations~\cite{xu2022geodiff}, protein backbones~\cite{yim2023framediff}, and molecular docking~\cite{corso2023diffdock}.
The connection should therefore be understood as a guiding principle rather than a direct transfer of the navigation protocol.
For each domain, the relevant states, transformations, and observable consistency probes would need to be defined according to the structure of the system.

The broader impact of this work is primarily conceptual.
It argues that action-conditioned world models should be viewed not only as video generators conditioned on action tokens, but as simulators whose generated dynamics should remain organized by the actions or transformations they receive.
This adds a structural dimension to world-model evaluation beyond visual realism and task performance.
The proposed metrics do not certify real-world deployment safety or replace closed-loop validation.
Rather, they expose action-faithfulness failures that may remain hidden under perceptual metrics or short-horizon prediction losses, providing an additional lens for analyzing world models before they are used in planning, embodied evaluation, or simulation-based testing.

\end{document}